\providecommand{\keywords}[1]{\textbf{\textit{Index terms---}} #1}
\newcommand{\bp}{\begin{proof} \small }
\newcommand{\ep}{\end{proof} \normalsize}
\newcommand{\epx}{\end{proof} \small}
\newcommand{\bpa}{\begin{proofappx} \footnotesize }
\newcommand{\epa}{\end{proofappx} \small }
\newtheorem{theorem}{Theorem}
\newtheorem{assumption}{Assumption}
\newtheorem*{theorem*}{Theorem}
\newtheorem*{proposition*}{Proposition}
\newtheorem*{corollary*}{Corollary}
\newtheorem*{lemma*}{Lemma}
\newtheorem*{assumption*}{Assumption}
\newtheorem*{definition*}{Definition}
\newtheorem*{claim*}{Claim}
\newcommand{\be}{\begin{equation}}
\newcommand{\ee}{\end{equation}}
\newcommand{\bs}{\begin{subequations}}
\newcommand{\es}{\end{subequations}}
\newcommand{\bq}{\begin{eqnarray}}
\newcommand{\eq}{\end{eqnarray}}
\newcommand{\bqn}{\begin{eqnarray*}}
\newcommand{\eqn}{\end{eqnarray*}}
\newcommand{\ba}{\left[ \begin{array}}
\newcommand{\ea}{\\ \end{array} \right]}
\newcommand{\ben}{\begin{enumerate}}
\newcommand{\een}{\end{enumerate}}
\def\real{{\mathchoice%
{\hbox{\rm\setbox1=\hbox{I}\copy1\kern-.45\wd1 R}}
{\hbox{\rm\setbox1=\hbox{I}\copy1\kern-.45\wd1 R}}
{\hbox{\scriptsize\rm\setbox1=\hbox{I}\copy1\kern-.45\wd1 R}}
{\hbox{\scriptsize\rm\setbox1=\hbox{I}\copy1\kern-.45\wd1 R}}}}
\def\Zint{{\mathchoice{\setbox1=\hbox{\sf Z}\copy1\kern-.75\wd1\box1}
{\setbox1=\hbox{\sf Z}\copy1\kern-.75\wd1\box1}
{\setbox1=\hbox{\scriptsize\sf Z}\copy1\kern-.75\wd1\box1}
{\setbox1=\hbox{\scriptsize\sf Z}\copy1\kern-.75\wd1\box1}}}
\newcommand{\complex}{ \hbox{\rm C\kern-0.45em\rule[.07em]{.02em}{.58em}%
\kern 0.43em}}
\begin{document}
%

%
%
%
\title{Mitigating Modality Quantity and Quality Imbalance in Multimodal Online Federated Learning}

\author{{Heqiang Wang, Weihong Yang, Xiaoxiong Zhong, Jia Zhou, Fangming Liu, Weizhe Zhang} 
\thanks{
H. Wang, W. Yang, X. Zhong, J. Zhou,  F. Liu and W. Zhang are with Peng Cheng Laboratory, Shenzhen, 518066, China. (Corresponding Authors: Weihong Yang, Xiaoxiong Zhong.)} 
\thanks{Acknowledge grants: China Postdoctoral Science Foundation with Certificate Number: 2025M773494. Peng Cheng Laboratory Project (Grant No. PCL2025AS213), and the Shenzhen Science and Technology Program under grant JCYJ20220530143811027. }
} 

\maketitle

\begin{abstract}
The Internet of Things (IoT) ecosystem produces massive volumes of multimodal data from diverse sources, including sensors, cameras, and microphones. With advances in edge intelligence, IoT devices have evolved from simple data acquisition units into computationally capable nodes, enabling localized processing of heterogeneous multimodal data. This evolution necessitates distributed learning paradigms that can efficiently handle such data. Furthermore, the continuous nature of data generation and the limited storage capacity of edge devices demand an online learning framework. Multimodal Online Federated Learning (MMO-FL) has emerged as a promising approach to meet these requirements. However, MMO-FL faces new challenges due to the inherent instability of IoT devices, which often results in modality quantity and quality imbalance (QQI) during data collection. In this work, we systematically investigate the impact of QQI within the MMO-FL framework and present a comprehensive theoretical analysis quantifying how both types of imbalance degrade learning performance. To address these challenges, we propose the Modality Quantity and Quality Rebalanced (QQR) algorithm, a prototype learning based method designed to operate in parallel with the training process. Extensive experiments on two real-world multimodal datasets show that the proposed QQR algorithm consistently outperforms benchmarks under modality imbalance conditions with promising learning performance.
\end{abstract}

\keywords{Federated Learning, Multimodal Learning, Online Learning, Internet of Thing, Modality Imbalanced.}

%
\IEEEpeerreviewmaketitle

\section{Introduction }
The rapid growth of the Internet of Things (IoT) \cite{atzori2010internet} has resulted in an extraordinary increase in data generated by diverse interconnected devices, such as smart home systems \cite{wang2023local}, wearable health trackers \cite{wu2020rigid}, and industrial sensors \cite{wang2025denoising}. To enable intelligent applications and services within this ecosystem, artificial intelligence, particularly machine learning and deep learning, has become an essential approach for building models from large-scale IoT data. Traditionally, model training has been conducted on centralized cloud platforms or in data centers. However, as both the volume of IoT data and the number of connected devices continue to rise, this centralized paradigm encounters scalability and efficiency bottlenecks. Transmitting vast amounts of raw data to a central server requires substantial network bandwidth and induces high communication costs, making it unsuitable for latency-critical scenarios such as autonomous driving \cite{ma2023autors} and real-time health monitoring \cite{al2018context}. Moreover, sending sensitive information to the cloud raises significant privacy risks \cite{guo2023privacy}. With IoT devices evolving from basic data acquisition units into capable edge nodes, there is growing potential to utilize their local computing resources to address these challenges. In this regard, federated learning (FL) \cite{lim2020federated, mills2021multi, duan2020self} has emerged as an attractive distributed learning framework. FL facilitates collaborative model training across devices while keeping raw data local, offering a communication-efficient and privacy-preserving alternative to centralized training. By minimizing data transfer and maintaining data confidentiality, FL provides a scalable pathway for deploying intelligent IoT applications.

\begin{figure}[htp]
\vspace{-5pt}
\centering
\subfloat{\includegraphics[width=1\linewidth]{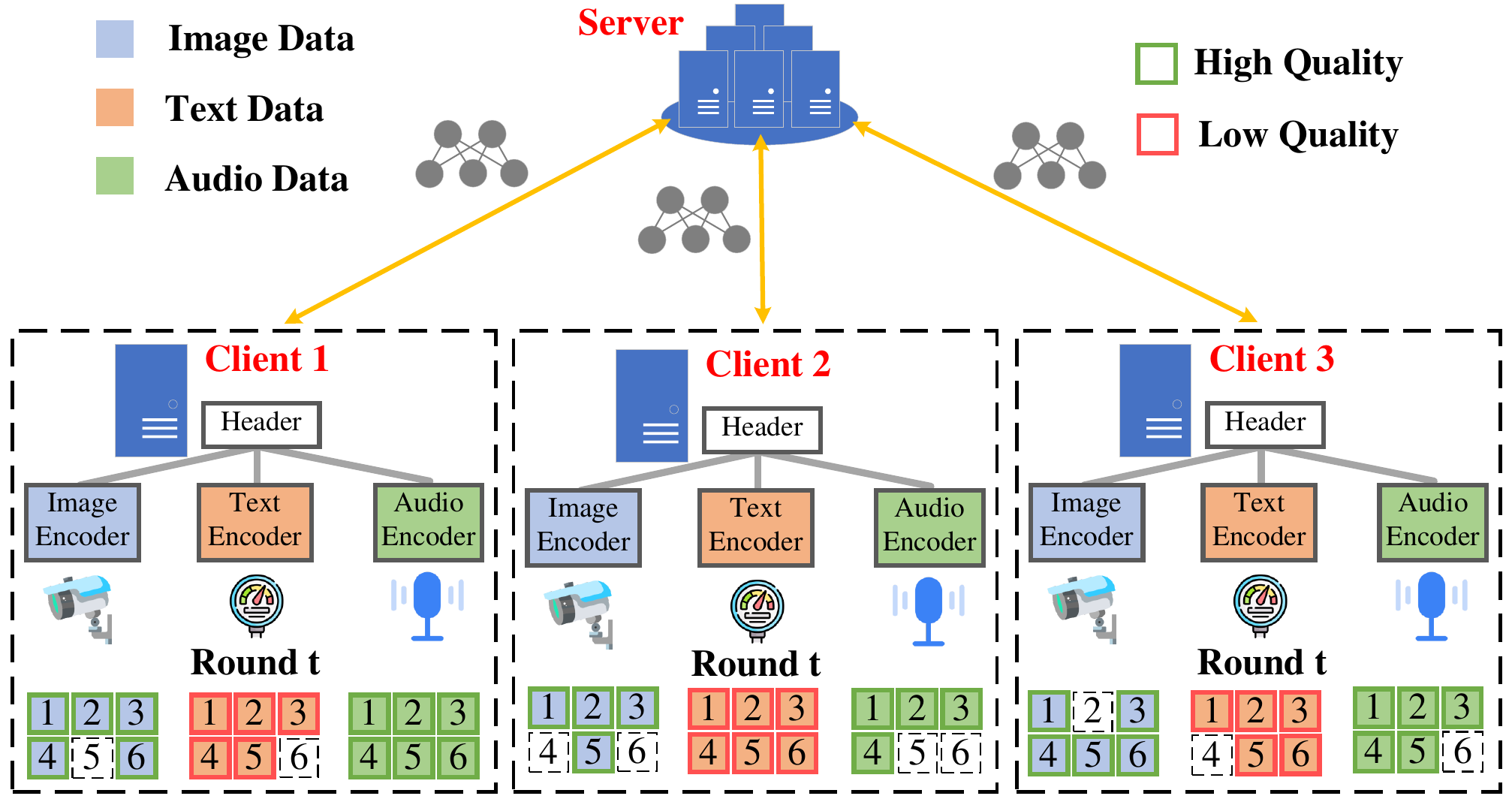}} 
\caption{IoT-Based MMO-FL with Modality Imbalance}  
\label{mmo_fl_ib}
\vspace{-5pt}
\end{figure}

Conventional FL frameworks for IoT have largely focused on unimodal data. In reality, however, IoT systems operate in inherently multimodal environments, where a variety of sensors capture data from different modalities \cite{huang2019multimodal}, such as visual information from cameras, audio signals from microphones, and structured or time-series data from other sensing devices. Such multimodal inputs provide richer and more comprehensive information for downstream tasks. To leverage this diversity, multimodal federated learning (MFL) \cite{che2023multimodal} has been introduced as an extension of FL, enabling collaborative model training across distributed multimodal data sources. In a typical MFL architecture, separate modality-specific encoders are employed, each designed to process a particular data type and extract discriminative features from high-dimensional raw inputs. The extracted features are then fused and passed to a shared head network, typically comprising deep neural layers and a softmax classifier, to produce the final prediction.

In conventional MFL, all training data are collected in advance, and model training proceeds in an offline manner. In contrast, real-world applications often require an online learning setting, where data are continuously acquired throughout the training process. Under such real-time and non-ideal conditions, variations in sensor performance and differences in modality characteristics can lead to modality imbalance during learning. In this work, modality imbalance is discussed in two primary forms: \textbf{Quantity Imbalance}: differences in sensor reliability or operation result in unequal amounts of data collected for each modality on a given client \cite{le2025multimodal}, and \textbf{Quality Imbalance}: variations in sensor capability cause differences in data integrity, with some modalities producing low-noise data while others suffer from higher noise levels \cite{zhang2024multimodal}. These issues are typically absent in offline settings, where all data are pre-collected, but they become especially pronounced in online scenarios. If not addressed, modality imbalance can severely degrade learning performance and, in extreme cases, prevent model convergence.

Motivated by the above challenges, this work focuses on addressing modality quantity and quality imbalance in the Multimodal Online Federated Learning (MMO-FL) framework \cite{wang2025multimodal}. To the best of our knowledge, this is the first systematic investigation of how multimodal imbalances affect learning performance in MMO-FL, examined from both theoretical and experimental perspectives. An overview of the MMO-FL framework with modality imbalance is shown in Fig.~\ref{mmo_fl_ib}. The quantity and quality of different modal data vary across clients and modalities at any given time, and these variations are inherently uncontrolled. The main contributions of this work are summarized as follows:
\begin{enumerate}
    \item  We study the problem of modality quantity and quality imbalance in MMO-FL scenarios, which primarily stems from the suboptimal conditions of data collection in online learning settings. Different sensors on clients acquire data from different modalities, and variations in their capabilities and operating frequencies result in disparities in both the quantity and quality of collected data. These imbalances ultimately impact the learning process.
    \item We present a detailed theoretical analysis of MMO-FL in the presence of modality quantity and quality imbalance. The derived regret bound explicitly captures the performance degradation caused by each imbalance type and offers insights for subsequent algorithm design.
    \item To address modality quantity and quality imbalance in the MMO-FL framework, we propose the Modality Quantity and Quality Rebalancing (QQR) algorithm. Built upon prototype learning, QQR provides efficient online rebalancing strategies that separately mitigate the effects of quantity and quality imbalance, thereby improving the learning performance of MMO-FL.
    \item We evaluate the proposed QQR algorithms within the MMO-FL framework on two multimodal datasets, UCI-HAR and MVSA-Single. The experimental results indicate that QQR outperforms benchmark methods in addressing both modality quantity and quality imbalance on both datasets. In addition, the results demonstrate strong generalization capability across diverse modality combinations in modality imbalance scenarios.
\end{enumerate}

The remainder of this paper is organized as follows. Section II reviews related work on multimodal federated learning, online federated learning, and modality quantity and quality imbalance. Section III describes the system model and problem formulation. Section IV outlines the MMO-FL workflow under modality quantity and quality imbalance. Section V presents the regret analysis of MMO-FL and evaluates the impact of these imbalances on learning performance. Section VI presents the proposed QQR algorithm. Section VII reports experimental results that validate the effectiveness of the proposed QQR algorithm within MMO-FL framework. Finally, Section VIII concludes the paper.

\section{Related Work }
\subsection{Multimodal Federated Learning}
MFL focuses on training task-specific models across clients that hold data from multiple modalities, enabling effective utilization of diverse and distributed information sources. As interest in MFL continues to grow, various algorithms have been proposed to address its unique challenges and improve model performance. One central issue is modality heterogeneity, where clients possess differing subsets of modalities, complicating model aggregation and limiting the efficiency of knowledge sharing. Several studies have investigated solutions for heterogeneous modality fusion to overcome this issue, including \cite{ouyang2023harmony, chen2024feddat, chen2022fedmsplit}. Another key challenge lies in selecting appropriate modalities for training under limited computational and communication resources. To address this, MPriorityFed \cite{bian2024prioritizing} introduces an adaptive resource allocation strategy that prioritizes modality encoders based on their utility and cost, improving efficiency. In addition to the above challenges, the problem of missing modalities has attracted increasing attention \cite{ma2021smil}. Missing data may result from incomplete data acquisition \cite{zhang2022m3care}, sensor malfunctions \cite{maheshwari2024missing}, or privacy-related restrictions \cite{ma2021smil}, all of which negatively affect model performance. For instance, the MFCPL framework \cite{le2025cross} employs cross-modal prototypes to facilitate knowledge transfer at both shared and modality-specific levels. However, most of the existing methods assume an offline learning scenario with access to a static dataset, limiting their applicability in real-world environments where data arrives sequentially and models must adapt continuously. Moreover, the missing modality problem \cite{wang2025multimodal} can be viewed as a specific case of the broader modality quantity imbalance addressed in this work. Unlike previous studies, we consider a more general setting where parts of a modality may be absent rather than the entire modality being unavailable. Furthermore, our approach incorporates the influence of modality quality on learning performance, a factor often overlooked in previous MFL works.

\subsection{Online Federated Learning }
Online learning processes data in a sequential manner, updating models incrementally, which makes it well-suited for scenarios involving continuous data streams and the need for real-time adaptation \cite{rang2021data}. Such methods are computationally efficient and remove the requirement of having the entire dataset beforehand, making them particularly advantageous in memory-constrained IoT settings. Within the FL paradigm, OFL extends these principles to distributed networks of decentralized learners \cite{hong2021communication}. Unlike traditional offline FL, OFL focuses on minimizing long-term cumulative regret rather than static optimization objectives during local updates. Although research on OFL is still relatively limited, several works have made notable progress. For example, \cite{kwon2023tighter} proposes a communication-efficient OFL approach that achieves strong performance while reducing communication costs, and \cite{mitra2021online} introduces FedOMD, an OFL method designed for uncertain environments that can handle streaming data without assuming prior knowledge of loss distributions. While most of these studies address the horizontal FL (HFL) setting, \cite{wang2023online} investigates the vertical FL (VFL) context, presenting an online VFL framework for cooperative spectrum sensing with sublinear regret guarantees. In the industrial domain, \cite{wang2025denoising} tackles noise interference and device heterogeneity challenges in online VFL systems. However, all these efforts focus on unimodal OFL. In practical IoT applications, multimodal data is common, requiring joint learning from multiple sensor types. Recent work \cite{wang2025multimodal} introduces the concept of MMO-FL and addresses the problem of missing modalities. It is worth noting that missing modalities can be seen as a special case of the quantity imbalance problem considered in this study, while our work investigates a more general setting that encompasses both quantity and quality imbalance.

\subsection{Modality Quality and Quantity Imbalance }
Multimodal learning often faces modality quality imbalance, where certain modalities provide high-quality signals, while others are degraded by noise or carry less informative content. To address this issue, work \cite{zhang2023provable} proposed a quality-aware multimodal fusion framework grounded in uncertainty estimation, achieving provably robust fusion performance under varying data quality conditions. Similarly, a comprehensive survey by \cite{zhang2024multimodal} highlights quality-varying multimodal data as a central challenge and categorizes existing approaches that tackle noise, missing modalities, and uneven signal quality. Recent studies have continued to explore inter-modal quality imbalance. For instance, \cite{fan2023pmr} and \cite{wei2024diagnosing} propose targeted methods to mitigate quality disparities across modalities, thereby improving overall multimodal learning performance. In parallel, growing attention has been given to modality quantity imbalance, where certain modalities are sparsely observed or underrepresented. \cite{fan2024overcome} examine this issue within the MFL context, identifying modal bias, a phenomenon in which clients possessing only unimodal data exert disproportionate influence on the global model. To address this, they introduce a balanced modality selection strategy to regulate client and modality participation, along with leveraging global prototypes for performance enhancement. Building on this work, their follow-up method, FedCMI \cite{fan2023balanced}, introduces cross-modal infiltration, which transfers knowledge from strong to weak modalities via projector modules and class-wise temperature scaling. While previous studies have addressed modality quality and quantity imbalance individually, as noted in the aforementioned works, these approaches are predominantly based on offline learning and do not account for the dynamic characteristics of online learning scenarios. Moreover, prior research has not examined the joint impact of modality quality and quantity imbalance, which introduces additional complexity and imposes stricter requirements on algorithm design. 

\section{System Model }
Before presenting the details of the system model, we provide a summary of the key notations in Table~\ref{table1}.

\begin{table}[ht]
\vspace{-5pt}
\centering
\caption{Key Notations}
\begin{tabular}{cc}
\toprule
\textbf{Symbol} & \textbf{Semantics} \\
\midrule
$K$ & The number of clients \\
$M$ & The number of modalities \\
$T$ & The number of global rounds \\
$N$ & The number of data samples collected by client\\
$E$ & The number of local iterations \\
$C$ & The number of classes \\
$\theta^m$ & The modality encoder\\
$\theta^0$ & The head encoder\\
$\eta$ & The learning rate \\
$Z$ & The feature extractor\\
$\Theta$ & The overall model \\
$P_k^{t, m}$ & The data availability status \\
$Q_k^{t, m}$ & The modality quality status \\
$\mathcal{D}_k^t$ & The local training dataset \\
$\textbf{G}^{t, \tau}_k$ & The gradient of the local overall model \\
$v_{k, c}^{t, m}$ & The local prototype \\
$v_{c}^{t, m}$ & The instantaneous global prototype \\
$\bar{v}_{c}^{t, m}$ & The cumulative global prototype \\
$\bar{\mathcal{V}}^{t}$ & The cumulative global prototype collection \\
$\mathcal{L}_{PCE}^m$ & The prototype cross entropy \\
$\mathcal{L}_{PLR}$ & The prototypical quality rebalancing loss \\
$\lambda_p$ & The intra-round quantity imbalance ratio \\
$\lambda_r$ & The inter-round quantity imbalance ratio \\
$\delta_r$ &  The intra-round quality imbalance ratio\\
\bottomrule
\end{tabular}
\label{table1}
\vspace{-5pt}
\end{table}

Consider an IoT-enabled smart factory comprising a cloud server and $K$ workstations serving as clients. Each workstation is equipped with multiple sensors to monitor factory conditions across various modalities at different locations, including visual, acoustic, and environmental sensors (e.g., temperature and humidity). With real-time data streaming from these sensors, the goal is to collaboratively train a unified global model using multimodal data distributed across clients. This configuration forms the basic setting for the MMO-FL problem. During factory operation, sensors at each client continuously gather new data over time, and the timeline is divided into discrete intervals denoted as $t = 1, 2, ..., T$. For simplicity, each interval is treated as a global training round.

In each global round, every client $k \in \mathcal{K}$ independently acquires data for each modality using its corresponding sensors. Let $N$ denote the target number of samples each client aims to collect per round. In practice, however, variations in sensor reliability and environmental factors can lead to modality quantity and quality imbalance, where some modalities fail to produce complete or valid samples. As a result, the actual local training dataset of client $k$ at round $t$ is defined as: 
\begin{align}
 & \mathcal{D}_k^t = \left (  \left [ \left ( X_k^{t, 1}, P_k^{t, 1}\right ) ,  \dots, \left ( X_k^{t, M}, P_k^{t, M} \right ) ; Q_k^{t}; Y_k^{t} \right ]  \right  ) \notag \\
& =  \left\{  \left\{ \left [ \left ( x_{k, n}^{t, m}, p_{k, n}^{t, m} \right ); y_{k, n}^{t} \right ] \right \}_{n = 1}^{N} ; q_k^{t, m} \right \}_{m = 1}^{M}  
\end{align}
Here, $x_{k, n}^{t, m}$ represents the $m^{th}$ modality data of the $n^{th}$ sample in client $k$ collected at global round $t$, and $y_{k, n}^{t}$ denotes the corresponding label. The binary indicator $p_{k, n}^{t, m} \in \left\{ 0, 1 \right\} $ specifies whether modality $m$ of the $n^{th}$ sample is available: a value of 1 indicates successful data collection, whereas 0 indicates that the modality is missing. Similarly, the binary indicator $q_{k}^{t, m} \in  \left\{ 0, 1 \right\} $ reflects the quality status of modality $m$, which can be determined by monitoring the operating condition of the client’s sensor. A value of 1 indicates that the collected modality data is of normal quality, while 0 signifies low quality. The overall dataset aggregated across all clients at global round $t$ is defined as $\mathcal{D}^t = \sum_{k=1}^K \mathcal{D}_k^t$. Due to the variability in $P_k^{t, m}$ and $Q_k^{t}$ across clients and rounds, the local datasets are inherently heterogeneous, leading to modality quality and quantity imbalance during training.

In the MFL, the global model trained collaboratively by the clients and aggregated on the server consists of two main components: the modality encoders $\theta^1, \dots, \theta^M$, which extract feature representations from raw inputs, and the head encoder $\theta^0$, which integrates these features to generate the final output. The architecture of each modality encoder is tailored to its specific data type. For client $k$ at round $t$, the feature vector for the $m^{th}$ modality, extracted by modality encoder $\theta^m$, is denoted as $ Z_k^{t, m} = \theta^m(X_k^{t, m} )$. The header encoder $\theta^0$ then combines the outputs $Z_k^{t, m}$ from all modality encoders to accomplish the learning objective. Based on these definitions, the loss function for the aggregated training dataset at round $t$ is formulated as follows: 
\begin{align}
F_t(\Theta, \mathcal{D}^t)  = \frac{1}{K}\sum_{k=1}^K f_t\left (\theta^0 \left (Z_k^{t, 1}, \dots, Z_k^{t, M}  \right ),   Y_k^{t}   \right)
\end{align}
Here, $\Theta = \left\{\theta^0, \theta^1, \dots, \theta^M \right\}$ denotes the overall model. The term $\theta^0 \left (Z_k^{t, 1}, \dots, Z_k^{t, M}  \right )$ represents the predicted labels generated by the head encoder, while $f_t$ is the loss function measuring the difference between the predictions and the ground-truth labels. Notably, only samples containing all modalities can be used to update the client’s loss function. This constraint can significantly hinder learning performance, as a large portion of the additional data, available only in a subset of modalities, is excluded from the model update process.

Since the training process relies on continuously collected real-time data rather than a fixed dataset, adopting an online learning approach is essential. Let the global model at each round be denoted as $\Theta^1, \ldots, \Theta^T$. The learning regret, $\text{Reg}_T$  is defined to measure the difference between the cumulative loss incurred by the learner and that of the best fixed model chosen in hindsight. Specifically:
\begin{align}
  \text{Reg}_T = \sum_{t=1}^{T}  F_t (\Theta^t; \mathcal{D}^t ) - \sum_{t=1}^{T} F_t (\Theta^*; \mathcal{D}^t) \label{regret}
\end{align}
Here, $\Theta^* = \arg\min_\Theta \sum_{t=1}^{T} F_t (\Theta; \mathcal{D}^t)$  denotes the optimal fixed model chosen in hindsight. The goal is to minimize the learning regret, which is equivalent to minimizing the cumulative loss over all rounds. Notably, if the regret grows sublinearly with respect to $T$, the online learning algorithm can asymptotically reduce the training loss over time.

The fixed optimal strategy in hindsight refers to a policy that could be devised by a centralized entity with complete prior knowledge of all per-round loss functions. In our setting, achieving such a strategy would require access to future information, including all upcoming data collections across rounds, which is inherently unpredictable due to its stochastic nature. Consequently, the complete loss functions are unknown at the outset and evolve dynamically over time. In this context, regret serves as a theoretical measure to quantify the performance gap between the proposed algorithm and the ideal but unattainable optimal strategy. For empirical evaluation, the performance of the proposed online algorithm is assessed using practical metrics such as test accuracy.

\section{The Workflow of MMO-FL with Modality Quantity and Quality Imbalance }
This section outlines the overall workflow of MMO-FL and examines the challenges posed by modality quantity and quality imbalance during the learning process. MMO-FL extends the HFL framework, in which each client independently collects local data samples and the server aggregates these updates to form the global model. Under modality imbalance, however, MMO-FL faces both quantitative and qualitative disparities among different modalities, which can disrupt the training process and substantially impair learning performance. In each global round $t \in \mathcal{T}$, where $\mathcal{T} = \{ 0, 1, 2, \dots, T-1 \}$, all clients perform a specified number of local training iterations, denoted by the parameter $E$. The index $\tau = 0, 1, 2, ..., E$ tracks these local iterations. Each global round $t$ then proceeds through the following steps. 

\subsubsection{\textbf{Client - New Data Collect}} At the start of each global round $t$, client $k$ aims to collect a new training dataset $ \mathcal{D}_k^t$ for that round. However, due to the inherent instability of IoT sensors, the collected data may exhibit modality quantity and quality imbalances. Data collection is conducted separately for each modality, producing paired feature–label sub-datasets that capture both quantitative and qualitative characteristics. To differentiate between the effects of quantity imbalance and quality imbalance, we define two types of local datasets to represent these conditions. The dataset only affected by \textbf{modality quantity imbalance} is denoted as $\mathcal{D}_k^{t,-}$ and is defined as:
\begin{align}
\mathcal{D}_k^{t,-} = \left (  \left [ \left ( X_k^{t, 1}, \hat{P}_k^{t, 1}\right )  \dots, \left  ( X_k^{t, M}, \hat{P}_k^{t, M}\right ); {Q}_k^{t}; Y_k^{t} \right ]\right ) \notag
\end{align}
Here, $\hat{P}_k^{t, m}$ represents the actual data availability status for modality $m$, which differs from the ideal collection state ${P}_k^{t, m}$. In the ideal case, all entries of ${P}_k^{t, m}$ are equal to 1. In contrast, $\hat{P}_k^{t, m}$ may contain zeros, indicating missing modality data for certain samples. To further characterize the impact of \textbf{modality quality imbalance}, we further define the dataset $\tilde{\mathcal{D}}_k^{t,-}$ as follow:
\begin{align}
\tilde{\mathcal{D}}_k^{t,-} = \left (  \left [ \left ( \tilde{X}_k^{t, 1}, \hat{P}_k^{t, 1}\right )  \dots, \left  ( \tilde{X}_k^{t, M}, \hat{P}_k^{t, M} \right ); \hat{Q}_k^{t}; Y_k^{t} \right ]\right ) \notag
\end{align}
In this context, $\hat{Q}_k^{t, m}$ denotes the actual quality status of the data collected for modality $m$ by client $k$ at global $t$. A value of 1 indicates that the collected data is of normal quality, i.e., $\tilde{X}_k^{t, m} = {X}_k^{t, m}$, whereas a value of 0 indicates that the data is of low quality, represented as $\tilde{X}_k^{t, m}$. Such online fluctuations in data quantity and quality do not occur in conventional offline multimodal FL settings. Although training can still proceed with the current imbalanced dataset, these conditions inevitably affect the training process and learning performance. Therefore, dedicated strategies are necessary to handle such imbalances effectively and ensure robust model training.

\subsubsection{\textbf{Client - Local Model Update}}
After data collection, each client $k$ initializes its local training using the current global model $\Theta^{t}$, received from the server. This model serves as the starting point for updating a new local model based on the collected, real-world, non-ideal training dataset $\tilde{\mathcal{D}}_k^{t,-}$. Each client performs $E$ iterations of online gradient descent (OGD) \cite{wang2024online} using the full training dataset. The update process is formulated as follows:
 \begin{align}
        &\Theta_k^{t,0} = \Theta^{t} \notag\\
        &\tilde{\Theta}_k^{t, \tau + 1} = \tilde{\Theta}_k^{t, \tau } - \eta \tilde{\textbf{G}}_{k}^{t, \tau, -}, \quad \forall \tau = 1, ..., E \notag\\
        &\tilde{\Theta}_k^{t+1} = \tilde{\Theta}_k^{t, E} 
\end{align}
Here, $\tilde{\textbf{G}}_{k}^{t, \tau, -} = \nabla F_{t}(\tilde{\Theta}_k^{t, \tau }, \tilde{\mathcal{D}}_k^{t,-})$ denotes the gradient computed on the current local dataset $\tilde{\mathcal{D}}_k^{t,-}$, which contains all modalities but is subject to both quantity and quality imbalance. The term ${\textbf{G}}_{k}^{t, \tau, -} = \nabla F_{t}(\tilde{\Theta}_k^{t, \tau }, {\mathcal{D}}_k^{t,-})$ represents the gradient influenced only by modality quantity imbalance, while $\tilde{\textbf{G}}_{k}^{t, \tau} = \nabla F_{t}(\tilde{\Theta}_k^{t, \tau }, \tilde{\mathcal{D}}_k^{t})$ corresponds to the gradient affected solely by modality quality imbalance. In all cases, $\eta$ denotes the learning rate. 

It should be noted that modality quantity imbalance causes some samples to contain complete modality information, while others include only partial modality data. At the same time, modality quality imbalance introduces discrepancies between certain modality representations and their true underlying values. Although these imbalances may degrade model performance, the training process remains viable, and model updates can still be carried out.

\subsubsection{\textbf{Client - Local Model Upload}}
Each client will upload the corresponding local model $\tilde{\Theta}_k^{t+1}$ to the server after finishing the $E$ iterations of local model update.

\subsubsection{\textbf{Server - Global Model Update}}
The server updates the global model by aggregating the local model updates received from the clients, as expressed by the following equation:
\begin{align}
    \Theta^{t+1} = \frac{1}{K}\sum_{k \in \mathcal{K}} \tilde{\Theta}^{t+1}_k 
\end{align}
The server then transmits the updated global model to all clients for the next training round. This process is repeated until the predefined number of global rounds is completed in the online learning setting.

From the preceding MMO-FL workflow under modality quantity and quality imbalance, it is clear that training can still proceed using the non-ideal local dataset $\tilde{\mathcal{D}}_k^{t,-}$. However, the resulting performance is likely to be suboptimal due to missing modalities and degraded data quality. A natural approach is to augment or correct $\tilde{\mathcal{D}}_k^{t,-}$,  but this is often impractical because complete modality information is rarely available. To address this challenge, we propose an online compensation mechanism that operates concurrently with the training process to mitigate the negative effects of modality imbalance. Before detailing the algorithm, we present a comprehensive theoretical analysis quantifying the impact of modality quantity and quality imbalance on the regret bound within the MMO-FL framework.

\section{Theoretical Analysis }
In this section, we present a detailed regret analysis of the proposed MMO-FL algorithm, highlighting several additional challenges in the theoretical study. First, the online learning paradigm requires assessing the long-term cumulative loss ${Reg}_T$ rather than focusing solely on convergence at a specific round. Second, due to the multimodal nature of the problem, the feature models corresponding to each modality must be examined individually, making it unsuitable to treat the local model as a single unified entity. Third, appropriate assumptions and configurations are necessary to account for both modality quantity and quality imbalance. These factors call for new proof techniques and supplementary assumptions to accurately model the problem’s complexity. The proof is conducted in three stages:  (1) We derive the regret bound for the case where the number of local iterations satisfies $ E > 1$ without considering modality quantity and quality imbalance. (2) We extend the analysis to account for modality quality imbalance under the same local iteration setting. (3) We will further explore the regret bound when $ E > 1$ while accounting for the impact of both modality quantity and quality imbalance.

\subsection{MMO-FL with local iterations $E>1$ and without modality quantity and quality imbalance}
To facilitate our analysis, we first introduce several additional definitions. After applying some basic transformations to the global model update equation above, we derive an alternative form of the global model update equation for the case where the local iteration is $E>1$ and no modality quantity and quality imbalance happen, as follows:
\begin{align}
\Theta^{t+ 1, 0} = \Theta^{t, 0} - \frac{\eta}{K} \sum_{k=1}^K \sum_{\tau = 0}^{E -1} {\textbf{G}}^{t, \tau}_k
\end{align}
where $\textbf{G}^{t, \tau}_k$ denotes the gradient of the local overall model for client $k$ across all $M$ modalities for round $t$ and local iteration $\tau$, which equals:
\begin{align}
\textbf{G}^{t, \tau}_k &= \left [  {\left ( \textbf{G}^{t, \tau, 0}_k \right )}^{\top}, \dots  {\left ( \textbf{G}^{t, \tau, m}_k  \right )}^{\top},  \dots  {\left ( \textbf{G}^{t, \tau, M}_k  \right )}^{\top} \right ]^{\top}
\end{align}
Subsequently, we will introduce the assumptions that are standard for analyzing online convex optimization, as referenced in \cite{wang2023online, wang2025denoising}. Some assumptions are defined at the modality level, tailored to the requirements of the multimodal parameter formulation.

\begin{assumption}
For any $\mathcal{D}^t$, the loss function $F_t (\Theta; \mathcal{D}^t)$ is convex with respect to $\Theta$ and differentiable. 
\label{assm:cov}
\end{assumption}

\begin{assumption}
The loss function is $L$-Lipschitz continuous, the partial derivatives for each modality satisfies: $\left \| \nabla_{m}  F_t (\Theta)   \right \|^2 \leq L^2$. 
\label{assm:bpd}
\end{assumption}

\begin{assumption}
The partial derivatives, corresponding to the consistent loss function, fulfills the following condition:
\begin{align}
\left \|  {\textbf{G}}_{k}^{t, \tau'} - {\textbf{G}}^{t, \tau}_{k} \right \| \leq \varphi \left \| \Theta_k^{t, \tau'} - \Theta_k^{t, \tau} \right \| \notag
\end{align}
\label{assm:gradient-change}
where $\tau'$ and $\tau$ indicate that they correspond to different local iterations.
\end{assumption}

For the purposes of subsequent theoretical analysis, we consider a $D$-dimensional vector for each modality in both the overall gradient and model. We define an arbitrary vector element $d \in [1, D]$ in overall gradient for modality $m$ as ${\textbf{G}}_{k, d}^m$, and similarly, the arbitrary vector element $d \in [1, D]$ in overall model for modality $m$ is denoted as $\Theta_{k, d}^m$. Including the head encoder $m = 0$, each overall gradient and model consists of a total of $(M + 1)D$ vector elements.

\begin{assumption}
The arbitrary vector element $d$ in the overall model $\Theta_{k, d}^m$ for any modality $m $ is bounded by: $\left | \Theta_{k, d}^m \right | \leq \sigma  $.
\label{assm:model-variant}
\end{assumption}

Assumption \ref{assm:cov} guarantees the convexity of the objective function, allowing us to apply convex optimization properties. Assumption \ref{assm:bpd} places a bound on the magnitude of the loss function’s partial derivatives at the modality level. Assumption \ref{assm:gradient-change} restricts the variation in these partial derivatives to a specific range, corresponding to the changes in the model across two different local iterations under the same loss function, thereby reflecting the smoothness property. Finally, Assumption \ref{assm:model-variant} defines the allowable range for each vector element of the overall model. Then, we derive the following Theorem \ref{thm:general1}.

\begin{theorem}\label{thm:general1}
Under Assumption 1-4, MMO-FL with local iterations $E>1$ and excluding the impact of modality quantity and quality imbalance, achieves the following regret bound:
\begin{align}
 & {Reg}_T   = \sum_{t=1}^{T} \sum_{k=1}^K \mathbb{E}_t \left [ F_t (\Theta^{t,0}; \mathcal{D}^t_k)  \right ] - \sum_{t=1}^{T} \sum_{k=1}^K F_t (\Theta^*; \mathcal{D}^t_k) \notag \\
 & \leq \frac{K  \left \| \Theta^{1, 0} - \Theta^* \right \|^2}{2 \eta E}  + \frac{\eta T K E (M+1) L^2}{2} \notag  \\
 & + 2\eta T D E K (M+1)^2 \varphi \sigma L \notag
\end{align}
\end{theorem}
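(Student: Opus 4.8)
The plan is to follow the standard online-gradient-descent regret template, adapted to the federated, multi-local-step, multimodal setting. First I would work with the aggregated per-round function $F_t(\Theta;\mathcal{D}^t) = \frac{1}{K}\sum_k F_t(\Theta;\mathcal{D}^t_k)$ and the equivalent global update $\Theta^{t+1,0} = \Theta^{t,0} - \frac{\eta}{K}\sum_k\sum_{\tau=0}^{E-1}\textbf{G}^{t,\tau}_k$. Starting from the elementary identity $\|\Theta^{t+1,0}-\Theta^*\|^2 = \|\Theta^{t,0}-\Theta^*\|^2 - 2\frac{\eta}{K}\langle \sum_k\sum_\tau \textbf{G}^{t,\tau}_k,\ \Theta^{t,0}-\Theta^*\rangle + \frac{\eta^2}{K^2}\|\sum_k\sum_\tau \textbf{G}^{t,\tau}_k\|^2$, I would rearrange to isolate the inner-product term, which by Assumption~\ref{assm:cov} (convexity) controls $\sum_k [F_t(\Theta^{t,0};\mathcal{D}^t_k) - F_t(\Theta^*;\mathcal{D}^t_k)]$ — but only up to a correction, because the gradient $\textbf{G}^{t,\tau}_k$ is evaluated at the drifted iterate $\Theta^{t,\tau}_k$, not at $\Theta^{t,0}$. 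That correction is exactly where Assumptions~\ref{assm:gradient-change} and \ref{assm:bpd} enter.

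The key steps in order: (1) telescoping — sum the rearranged inequality over $t=1,\dots,T$ so that the $\|\Theta^{t,0}-\Theta^*\|^2$ terms collapse, leaving $\frac{K}{2\eta E}\|\Theta^{1,0}-\Theta^*\|^2$ as the first term of the bound (the factor $1/E$ appearing because each global step aggregates $E$ inner gradients). (2) The quadratic term: bound $\|\sum_k\sum_{\tau=0}^{E-1}\textbf{G}^{t,\tau}_k\|^2$. Decompose over the $M+1$ modality blocks; each block's partial gradient has squared norm $\le L^2$ by Assumption~\ref{assm:bpd}, and summing $K$ clients and $E$ iterations with Cauchy–Schwarz gives a bound of order $KE(M+1)L^2$ per round, producing the $\frac{\eta T K E (M+1)L^2}{2}$ term. (3) The drift/consistency term: write $\langle \textbf{G}^{t,\tau}_k,\Theta^{t,0}-\Theta^*\rangle = \langle \textbf{G}^{t,0}_k,\Theta^{t,0}-\Theta^*\rangle + \langle \textbf{G}^{t,\tau}_k - \textbf{G}^{t,0}_k,\Theta^{t,0}-\Theta^*\rangle$. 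The first piece is the clean convexity term; for the second, apply Cauchy–Schwarz, then Assumption~\ref{assm:gradient-change} to get $\|\textbf{G}^{t,\tau}_k-\textbf{G}^{t,0}_k\| \le \varphi\|\Theta^{t,\tau}_k - \Theta^{t,0}_k\|$, and bound the local drift $\|\Theta^{t,\tau}_k-\Theta^{t,0}_k\| \le \eta\tau L\sqrt{M+1} \le \eta E L\sqrt{M+1}$ since each of the $\tau$ inner steps moves by at most $\eta L$ per modality block. For the factor $\|\Theta^{t,0}-\Theta^*\|$, use Assumption~\ref{assm:model-variant}: each of the $(M+1)D$ coordinates of $\Theta^{t,0}-\Theta^*$ is bounded by $2\sigma$, so $\|\Theta^{t,0}-\Theta^*\| \le 2\sigma\sqrt{(M+1)D}$. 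Multiplying these pieces together over $K$ clients, $E$ iterations, $T$ rounds, and combining the square-roots $\sqrt{(M+1)D}\cdot\sqrt{M+1}$ into $(M+1)\sqrt D$ — actually yielding the stated $2\eta T D E K (M+1)^2\varphi\sigma L$ after the bookkeeping on dimensions — gives the third term. (4) Finally, take expectations $\mathbb{E}_t[\cdot]$ (the randomness coming from the data collected at round $t$) and assemble the three contributions.

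The main obstacle I anticipate is the careful dimensional bookkeeping in step (3): tracking how the per-modality bounds ($L$ from Assumption~\ref{assm:bpd}, $\sigma$ from Assumption~\ref{assm:model-variant}, each applying blockwise over $M+1$ blocks and coordinatewise over $D$ dimensions) combine through the nested Cauchy–Schwarz applications to produce precisely the $D(M+1)^2$ factor rather than something like $\sqrt{D}(M+1)^{3/2}$. One must be consistent about whether a given inequality is applied at the scalar-coordinate level, the modality-block level, or the full-vector level, and the $1/E$ versus $E$ factors from the inner loop must be matched correctly against the $1/K$ aggregation. A secondary subtlety is justifying the local-drift bound $\|\Theta^{t,\tau}_k - \Theta^{t,0}_k\|\le \eta E L\sqrt{M+1}$: it follows by a triangle inequality over the $\tau\le E$ inner OGD steps, each bounded via Assumption~\ref{assm:bpd}, but one should state this as a short lemma-style sub-step rather than fold it silently into the main chain. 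Once these are pinned down, the telescoping and the collection of terms is routine.
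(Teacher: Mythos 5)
Your proposal follows essentially the same route as the paper's proof: expand $\|\Theta^{t+1,0}-\Theta^*\|^2$ around the aggregated update, use convexity at $\Theta^{t,0}$ via $\textbf{G}^{t,0}_k$, bound the local-drift correction with Assumptions~\ref{assm:bpd}--\ref{assm:model-variant}, bound the quadratic term blockwise with Assumption~\ref{assm:bpd}, and telescope over $t$, which reproduces all three terms of the stated bound. The only difference is bookkeeping in the drift term, where your vector-level Cauchy--Schwarz actually yields a tighter constant (of order $(M+1)\sqrt{D}$ rather than the paper's $(M+1)^2 D$), so your argument implies the theorem a fortiori.
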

\begin{proof}
The proof can be found in Appendix B.
\end{proof}
According to Theorem \ref{thm:general1}, by setting $\eta = \mathcal{O}(1/\sqrt{T})$, the MMO-FL can achieve a sublinear regret rate $\mathcal{O}(\sqrt{T})$. A sublinear regret bound means that the average regret per round, calculated as the total regret divided by the number of rounds, approaches zero as the number of rounds increases indefinitely. In the following analysis, we will consider the effect of modality quality and quantity imbalance.

\subsection{MMO-FL with local iterations $E>1$ and with modality quality imbalance}
In this section, we extend the proof to the case where the number of local iterations satisfies $E>1$ while accounting for the impact of modality quality imbalance. Using a similar transformation approach as before, we obtain the global model update equation as follows:
\begin{align}
\Theta^{t+ 1, 0} = \Theta^{t, 0} - \frac{\eta}{K} \sum_{k=1}^K \sum_{\tau = 0}^{E -1} \tilde{\textbf{G}}^{t, \tau}_k
\end{align}

We introduce an additional assumption to bound the influence of low-quality data on gradient computation. In particular, we assume that the reduction in data quality results from noise introduced during the data collection process.

\begin{assumption}
For any arbitrary component $d$ of the overall gradient corresponding to modality $m$, the difference between the gradients computed with and without the influence of noise is bounded by a finite range: 
\begin{align}
\left |\tilde{\textbf{G}}^{t, \tau, m}_{k, d} - {\textbf{G}}^{t, \tau, m}_{k, d}  \right | \leq \rho_m
\end{align}
where $\rho_m$ denotes the maximum deviation induced by noise in modality $m$.
\label{assm:gradient-noise}
\end{assumption}

Assumption \ref{assm:gradient-noise} ensures that the deviation in each component of the gradient vector remains bounded within a specified range when comparing gradients computed with and without noise. The parameter $\rho$ quantifies the extent of the noise impact, where a larger value of $\rho$ indicates a higher degree of modality quality imbalance. Based on this assumption, we derive Theorem \ref{thm:general2}, which incorporates the effect of modality quality imbalance into the analysis.

\begin{theorem}\label{thm:general2}
Under Assumption 1-5, MMO-FL with local iterations $E>1$ and including the impact of modality quality imbalance, achieves the following regret bound:
\begin{align}
& {Reg}_T   = \sum_{t=1}^{T} \sum_{k=1}^K \mathbb{E}_t \left [ F_t (\Theta^{t,0}; \tilde{\mathcal{D}}^t_k)  \right ] - \sum_{t=1}^{T} \sum_{k=1}^K F_t (\Theta^*; \tilde{\mathcal{D}}^t_k) \notag \\
& \leq  \frac{ K \left \| \Theta^{1, 0} - \Theta^* \right \|^2}{2 \eta E }  +  2 T \eta  D E K (M+1)^2 \sigma \varphi L  \notag \\
& + 2 (M+1) T D K  \sigma \rho_{\max} + \eta T K E (M+1) L^2  \notag  \\
& +  \eta T K E (M+1)^2 D^2 \rho_{\max}^2 \notag
\end{align}
Where $\rho_{\max} = \max_{m \in \mathcal{M}}\rho_{m}$.
\end{theorem}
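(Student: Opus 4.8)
The plan is to re-run the telescoping/potential argument behind Theorem~\ref{thm:general1} (Appendix B) with the noisy gradients $\tilde{\textbf{G}}_k^{t,\tau}$ in place of $\textbf{G}_k^{t,\tau}$, carrying through every step the perturbation $\tilde{\textbf{G}}_k^{t,\tau}-\textbf{G}_k^{t,\tau}$, each component of which is bounded by $\rho_{\max}$ via Assumption~\ref{assm:gradient-noise}. Starting from the aggregated update $\Theta^{t+1,0}=\Theta^{t,0}-\tfrac{\eta}{K}\sum_{k}\sum_{\tau=0}^{E-1}\tilde{\textbf{G}}_k^{t,\tau}$, I would expand $\|\Theta^{t+1,0}-\Theta^*\|^2$ and rearrange so that the cross term $\tfrac{2\eta}{K}\langle\sum_k\sum_\tau\tilde{\textbf{G}}_k^{t,\tau},\Theta^{t,0}-\Theta^*\rangle$ is isolated against the potential drop $\|\Theta^{t,0}-\Theta^*\|^2-\|\Theta^{t+1,0}-\Theta^*\|^2$ and the squared aggregate $\tfrac{\eta^2}{K^2}\|\sum_k\sum_\tau\tilde{\textbf{G}}_k^{t,\tau}\|^2$.

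Next, using convexity of $F_t(\cdot;\tilde{\mathcal{D}}_k^t)$ (Assumption~\ref{assm:cov}) together with Assumption~\ref{assm:gradient-change} to transfer from the local iterates $\tilde{\Theta}_k^{t,\tau}$ back to $\Theta^{t,0}$, I would lower-bound the cross term by $E\sum_k\big[F_t(\Theta^{t,0};\tilde{\mathcal{D}}_k^t)-F_t(\Theta^*;\tilde{\mathcal{D}}_k^t)\big]$ minus (a) the same $\varphi$-smoothness/drift remainder as in Theorem~\ref{thm:general1}, which is governed by the iterate displacements $\|\tilde{\Theta}_k^{t,\tau}-\Theta^{t,0}\|$ and hence carries over unchanged, and minus (b) first-order noise remainders of the form $\langle\tilde{\textbf{G}}_k^{t,\tau}-\textbf{G}_k^{t,\tau},\Theta^{t,0}-\Theta^*\rangle$. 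Each such inner product is bounded component-wise, via $\sum_{m=0}^{M}\sum_{d=1}^{D}|\tilde{\textbf{G}}_{k,d}^{t,\tau,m}-\textbf{G}_{k,d}^{t,\tau,m}|\,|\Theta_{d}^{t,0,m}-\Theta_{d}^{*,m}|\le 2(M+1)D\sigma\rho_{\max}$, using Assumptions~\ref{assm:model-variant} and~\ref{assm:gradient-noise}. The crucial point is that this remainder is \emph{not} scaled by $\eta$, so after summing over $k$ and $t$ it yields the genuinely new term $2(M+1)TDK\sigma\rho_{\max}$ that grows linearly in $T$ — exactly the signature that quality imbalance by itself breaks sublinear regret.

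For the squared aggregate I would use $\|\tilde{\textbf{G}}_k^{t,\tau}\|^2\le 2\|\textbf{G}_k^{t,\tau}\|^2+2\|\tilde{\textbf{G}}_k^{t,\tau}-\textbf{G}_k^{t,\tau}\|^2$, bound $\|\textbf{G}_k^{t,\tau}\|^2\le(M+1)L^2$ (Assumption~\ref{assm:bpd}), and use the coarse estimate $\|\tilde{\textbf{G}}_k^{t,\tau}-\textbf{G}_k^{t,\tau}\|^2\le\big(\sum_{m,d}|\tilde{\textbf{G}}_{k,d}^{t,\tau,m}-\textbf{G}_{k,d}^{t,\tau,m}|\big)^2\le(M+1)^2D^2\rho_{\max}^2$ (Assumption~\ref{assm:gradient-noise}), then apply Cauchy--Schwarz over the $KE$ summands. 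Dividing the telescoped potential by $2\eta E$ and summing the remainders over $t$ reassembles the bound: the initialization term $\tfrac{K\|\Theta^{1,0}-\Theta^*\|^2}{2\eta E}$ and the term $2T\eta DEK(M+1)^2\sigma\varphi L$ exactly as in Theorem~\ref{thm:general1}; the factor-of-two inflated $\eta TKE(M+1)L^2$ coming from the $2\|\textbf{G}\|^2$ split; and the extra $\eta TKE(M+1)^2D^2\rho_{\max}^2$ coming from the $2\|\tilde{\textbf{G}}-\textbf{G}\|^2$ split.

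The main obstacle will be the bookkeeping in the middle step: one must route the perturbation $\tilde{\textbf{G}}-\textbf{G}$ so that its first-order part stays inside the convexity chain (un-scaled by $\eta$, producing the irreducible linear-in-$T$ penalty) while its second-order part lives only in the $\eta^2$ aggregate-gradient term, and one must leave Theorem~\ref{thm:general1}'s smoothness/drift estimate untouched so that its $\varphi$-term survives verbatim. This forces a consistent choice of the decomposition points for $\tilde{\textbf{G}}$ (at $\Theta^{t,0}$ versus at the local iterates $\tilde{\Theta}_k^{t,\tau}$) and the precise norm inequalities used ($(a+b)^2\le 2a^2+2b^2$ and $\|\cdot\|_2\le\|\cdot\|_1$), so as to reproduce the stated constants rather than looser ones.
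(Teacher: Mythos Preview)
Your proposal is correct and follows essentially the same route the paper takes: rerun the Appendix~B telescoping argument with $\tilde{\textbf{G}}_k^{t,\tau}$, split $\tilde{\textbf{G}}=\textbf{G}+(\tilde{\textbf{G}}-\textbf{G})$ so that the clean part feeds the convexity/drift chain of Theorem~\ref{thm:general1} verbatim while the perturbation produces the un-scaled first-order remainder $2(M{+}1)TDK\sigma\rho_{\max}$, and handle $\|\tilde{\textbf{G}}\|^2$ via $(a{+}b)^2\le 2a^2+2b^2$ together with the $\ell_1$-bound $\|\tilde{\textbf{G}}-\textbf{G}\|^2\le (M{+}1)^2D^2\rho_{\max}^2$ to obtain the doubled $L^2$ term and the new $\rho_{\max}^2$ term. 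The bookkeeping caveat you flag --- keeping the $\varphi$-drift estimate intact despite the iterates now being driven by $\tilde{\textbf{G}}$, and routing the noise so that only a single first-order remainder (not two) survives --- is exactly the place where the decomposition point for $\tilde{\textbf{G}}$ must be chosen consistently with how Assumptions~\ref{assm:bpd} and~\ref{assm:gradient-change} are invoked; once that choice is fixed the constants you list drop out.
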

\begin{proof}
The proof can be found in Appendix D.
\end{proof}
Based on Theorem 2, by choosing the learning rate as $\eta = \mathcal{O}(1/\sqrt{T})$, MMO-FL with the impact of modality quality imbalance can achieve a regret bound of $\mathcal{O}(\sqrt{T} + T \rho_{\max})$ over $T$ time rounds. Our analysis reveals that the term $\mathcal{O}(T \rho_{\max})$, which accounts for the low quality, plays a crucial role in determining whether a sublinear regret bound can be achieved. If appropriate algorithms are designed such that $\rho_{\max}$ decreases with $T$, the overall regret bound remains sublinear. Next, we extend the analysis to incorporate the impact of modality quantity imbalance, which leads to the following case.

\subsection{MMO-FL with local iterations $E>1$ and with modality quantity and quality imbalance}
In this section, we extend the proof to the case where the local iteration $E>1$ and considering the effect with both modality quality and quantity imbalance. Following a similar transformations approach, we derive the global model update equation as follows:
\begin{align}
\Theta^{t+ 1, 0} = \Theta^{t, 0} - \frac{\eta}{K} \sum_{k=1}^K \sum_{\tau = 0}^{E -1} \tilde{\textbf{G}}^{t, \tau, -}_k
\end{align}
To further characterize the impact of modality quantity imbalance, we introduce an additional assumption that bounds the deviation between the gradient computed on individual data samples and the corresponding batch gradient.
\begin{assumption}
For any arbitrary component $d$ of the overall gradient corresponding to modality $m$,  the deviation between the gradient computed from a single data sample and the corresponding batch gradient is bounded within a finite range:
\begin{align}
\left |  g^{t, \tau, m}_{k, d} - {\textbf{G}}^{t, \tau, m}_{k, d}  \right | \leq  \gamma_m
\end{align}
Here, $ g^{t, \tau, m}_{k, d}$ denotes the gradient computed from a single data sample, while ${\textbf{G}}^{t, \tau, m}_{k, d} = \frac{1}{N} \sum_{n=1}^N  g^{t, \tau, m}_{k, d}$ represents the corresponding batch gradient averaged over $N$ samples.
\label{assm:gradient-one}
\end{assumption}
Assumption \ref{assm:gradient-one} ensures a bounded deviation between the gradient computed from individual data samples and the average gradient over the batch. This allows us to quantify the impact of modality quantity imbalance. Then we derive Theorem \ref{thm:general3}, which integrates the effects of both modality quality and quantity imbalance into the regret bound.

\begin{theorem}\label{thm:general3}
Under Assumption 1-6, MMO-FL with local iterations $E>1$ and including the impact of modality quantity and quality imbalance, achieves the following regret bound:
\begin{align}
& {Reg}_T   = \sum_{t=1}^{T} \sum_{k=1}^K \mathbb{E}_t \left [ F_t (\Theta^{t,0}; \tilde{\mathcal{D}}^{t, -}_k)  \right ] - \sum_{t=1}^{T} \sum_{k=1}^K F_t (\Theta^*; \tilde{\mathcal{D}}^{t, -}_k) \notag \\
& \leq  \frac{ K \left \| \Theta^{1, 0} - \Theta^* \right \|^2}{2 \eta E } + \frac{3}{2} \eta T K E (M+1) L^2 \notag \\
& + 6 \eta T K E (M+1)^2 D^2  \gamma_{\max}^2  \left (\frac{N-N_{\min}}{N}\right )^2 \notag\\
& + \frac{3}{2} \eta T K E (M+1)^2 D^2  \rho_{\max}^2 + 2 \eta T D E K (M+1)^2 \sigma \varphi L \notag\\
&  + 2 TDK (M+1) \sigma \gamma_{\max} \left (\frac{N-N_{\min}}{N}\right ) \notag \\
& + 4 TDK (M+1) \sigma \rho_{\max}
\end{align}
Where $\gamma_{\max} = \max_{m \in \mathcal{M}} \gamma_{m}$ and $N_{\min} = \min_{m, t, k}N_k^{t, m}$.
\end{theorem}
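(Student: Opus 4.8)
\textbf{Proof proposal for Theorem \ref{thm:general3}.}

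The plan is to mirror the two-stage argument already used for Theorems \ref{thm:general1} and \ref{thm:general2}, but now decompose the ``actual'' gradient $\tilde{\textbf{G}}^{t,\tau,-}_k$ into three pieces relative to the ideal batch gradient $\textbf{G}^{t,\tau}_k$: a quantity-imbalance perturbation, a quality-imbalance (noise) perturbation, and the ideal part. Concretely, I would write, componentwise, $\tilde{\textbf{G}}^{t,\tau,m}_{k,d} - \textbf{G}^{t,\tau,m}_{k,d} = \big(\tilde{\textbf{G}}^{t,\tau,m}_{k,d} - \textbf{G}^{t,\tau,-,m}_{k,d}\big) + \big(\textbf{G}^{t,\tau,-,m}_{k,d} - \textbf{G}^{t,\tau,m}_{k,d}\big)$, where $\textbf{G}^{t,\tau,-,m}_{k,d}$ is the batch gradient over the $N_k^{t,m}$ available samples without noise. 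The first bracket is bounded using Assumption \ref{assm:gradient-noise} (magnitude $\le \rho_m$, since the available-sample average of noisy minus clean per-sample gradients is still within $\rho_m$). The second bracket is the error from averaging over only $N_k^{t,m} \ge N_{\min}$ samples instead of all $N$; expanding $\textbf{G}^{t,\tau,-,m}_{k,d} - \textbf{G}^{t,\tau,m}_{k,d} = \frac{1}{N_k^{t,m}}\sum_{\text{avail}} g - \frac{1}{N}\sum_{\text{all}} g$ and regrouping, each per-sample gradient sits within $\gamma_m$ of $\textbf{G}^{t,\tau,m}_{k,d}$ by Assumption \ref{assm:gradient-one}, so this term is bounded by $\gamma_m \cdot \frac{N - N_k^{t,m}}{N} \le \gamma_{\max}\big(\frac{N-N_{\min}}{N}\big)$. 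Hence $\big\|\tilde{\textbf{G}}^{t,\tau}_k - \textbf{G}^{t,\tau}_k\big\|$ is controlled, per modality per component, by $\gamma_{\max}\big(\frac{N-N_{\min}}{N}\big) + \rho_{\max}$.

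Next I would plug this decomposition into the standard OGD regret skeleton. Starting from the global update $\Theta^{t+1,0} = \Theta^{t,0} - \frac{\eta}{K}\sum_k\sum_\tau \tilde{\textbf{G}}^{t,\tau,-}_k$, I expand $\|\Theta^{t+1,0} - \Theta^*\|^2$, telescope over $t$, and use convexity (Assumption \ref{assm:cov}) to lower-bound the inner products $\langle \textbf{G}^{t,\tau}_k, \Theta^{t,\tau}_k - \Theta^*\rangle$ by loss differences $F_t(\Theta^{t,\tau}_k;\mathcal{D}) - F_t(\Theta^*;\mathcal{D})$ — exactly as in Appendix B/D. The cross term $\langle \tilde{\textbf{G}}^{t,\tau,-}_k - \textbf{G}^{t,\tau}_k,\ \Theta^{t,\tau}_k - \Theta^*\rangle$ is handled coordinatewise: each of the $(M+1)D$ components of the model difference is at most $2\sigma$ by Assumption \ref{assm:model-variant}, and the gradient-difference component is at most $\gamma_{\max}\big(\frac{N-N_{\min}}{N}\big) + \rho_{\max}$, producing the linear-in-$T$ terms $2TDK(M+1)\sigma\gamma_{\max}\big(\frac{N-N_{\min}}{N}\big) + 4TDK(M+1)\sigma\rho_{\max}$ (the extra factor on the $\rho$ term arising because the noise perturbation also enters through the $\Theta^{t,\tau}_k$-vs-$\Theta^{t,0}_k$ drift, as in Theorem \ref{thm:general2}). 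The squared-gradient term $\frac{\eta}{2}\sum\|\tilde{\textbf{G}}^{t,\tau,-}_k\|^2$ is split via $\|a+b+c\|^2 \le 3(\|a\|^2+\|b\|^2+\|c\|^2)$ into the ideal part $\frac{3}{2}\eta TKE(M+1)L^2$, the quantity part $6\eta TKE(M+1)^2D^2\gamma_{\max}^2\big(\frac{N-N_{\min}}{N}\big)^2$, and the quality part $\frac{3}{2}\eta TKE(M+1)^2D^2\rho_{\max}^2$. Finally, the intra-round drift $\|\Theta^{t,\tau}_k - \Theta^{t,0}_k\|$ combined with the smoothness Assumption \ref{assm:gradient-change} yields the $2\eta TDEK(M+1)^2\sigma\varphi L$ term exactly as before.

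The main obstacle I anticipate is bookkeeping the three-way split cleanly so that the constants match the stated bound — in particular, keeping track of where a factor of $2$ versus $3/2$ versus $6$ arises, since the $3$-term Cauchy--Schwarz/Young inequality must be applied consistently to both the $\|\tilde{\textbf{G}}\|^2$ term and wherever the drift is re-expanded, and the $\rho$-dependent linear term picks up its coefficient $4$ from appearing in two distinct places (the direct cross term and the drift-correction term). A secondary subtlety is justifying that the per-modality bound $\frac{N-N_k^{t,m}}{N} \le \frac{N-N_{\min}}{N}$ is uniform across $t,k,m$, which follows directly from $N_{\min} = \min_{m,t,k} N_k^{t,m}$ but should be stated. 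Everything else — the telescoping, the convexity step, the expectation $\mathbb{E}_t$ over the stochastic data collection — is identical in structure to the proofs of Theorems \ref{thm:general1} and \ref{thm:general2}, so I would present those steps compactly and devote the detail to the gradient decomposition and constant-tracking.
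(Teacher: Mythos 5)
Your overall architecture --- splitting the actual gradient $\tilde{\textbf{G}}^{t,\tau,-}_k$ into an ideal part, a quantity perturbation and a quality perturbation, bounding the perturbations componentwise via Assumptions \ref{assm:gradient-noise}--\ref{assm:gradient-one} together with the model bound of Assumption \ref{assm:model-variant}, and then running the same telescoping/convexity OGD skeleton with a three-term Young inequality for $\|\tilde{\textbf{G}}^{t,\tau,-}_k\|^2$ and the Assumption \ref{assm:gradient-change} drift term --- is the route the paper follows, so the plan is right in outline.

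The genuine gap sits in the one step that carries the theorem's message, namely your bound on the partial-batch deviation $\textbf{G}^{t,\tau,-,m}_{k,d}-\textbf{G}^{t,\tau,m}_{k,d}$. The argument you sketch ("each per-sample gradient is within $\gamma_m$ of the batch gradient, hence the deviation is at most $\gamma_m\frac{N-N_k^{t,m}}{N}$") does not follow: writing $g=\textbf{G}+e$ with $|e|\le\gamma_m$ only gives the trivial bound $\left|\frac{1}{N'}\sum_{\mathrm{avail}}e\right|\le\gamma_m$ (with $N'=N_k^{t,m}$), and no missing-fraction factor appears. To extract that factor you must exploit $\sum_{\mathrm{all}}e=0$, e.g. regroup as $\frac{1}{N'}\sum_{\mathrm{avail}}g-\frac{1}{N}\sum_{\mathrm{all}}g=\frac{N-N'}{N}\left(\bar g_{\mathrm{avail}}-\bar g_{\mathrm{miss}}\right)$, which yields $2\gamma_m\frac{N-N'}{N}$, or use $\sum_{\mathrm{avail}}e=-\sum_{\mathrm{miss}}e$ to get $\frac{N-N'}{N'}\gamma_m$ --- neither is the $\gamma_m\frac{N-N'}{N}$ you assert. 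This is not mere bookkeeping, because your constants are then internally inconsistent: the coefficient $6$ you quote for the squared quantity term is what the three-term Young inequality produces from a per-component deviation of $2\gamma_{\max}\frac{N-N_{\min}}{N}$ (your claimed bound would give $3/2$), whereas the coefficient $2$ in the linear cross term corresponds to a deviation of $\gamma_{\max}\frac{N-N_{\min}}{N}$; you cannot obtain both from one estimate without specifying which bound is used where. Likewise, attributing the factor $4$ on the linear $\rho_{\max}$ term to "the drift-correction term" is unsupported --- the drift enters only through the $\varphi$ term, which carries $\eta$ and $L$ and no $\rho$; the doubling must instead come from how the quality perturbation is counted when relating the noisy partial-batch average to the clean full-batch average, which your sketch does not establish. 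A further small point: Assumption \ref{assm:gradient-noise} is stated for batch gradients, so invoking it for the average over only the available samples needs an explicit (if mild) extension. Until the deviation lemma and its two distinct uses (linear cross term and squared term) are written out with consistent constants, the stated bound is not yet proved.
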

\begin{proof}
The proof can be found in Appendix F.
\end{proof}

Based on Theorem \ref{thm:general3}, by choosing the learning rate as $\eta = \mathcal{O}(1/\sqrt{T})$, MMO-FL with the impact of modality quality and quantity imbalance can achieve a regret bound of $\mathcal{O}(\sqrt{T} + T \rho_{\max} + T \gamma_{\max} \left (\frac{N-N_{\min}}{N}\right ) )$ over $T$ time rounds. In addition to the term $\mathcal{O}(T \rho_{\max} )$ that appeared in Theorem \ref{thm:general2}, the regret bound is further influenced by the term $T \gamma_{\max} \left (\frac{N-N_{\min}}{N}\right ) )$, which captures the effect of modality quantity imbalance. To achieve a sublinear regret bound, two strategies can be considered: reducing $\gamma_{\max}$ or increasing $N_{\min}$ to approach $N$. Since $\gamma_{\max}$ is typically determined by the data and not controllable, a more practical approach is to compensate for missing data to raise $N_{\min}$ toward $N$. This observation motivates the design of the algorithm proposed in the subsequent section. The theoretical analysis above provides clear guidance for algorithm design. In the following section, we introduce the rebalancing algorithm, which leverages prototype learning to address both types of modality imbalance.

\section{Modal Quantity and Quality Rebalancing Algorithm } 
Building upon the theoretical analysis presented in the previous section, it is evident that both modality quality imbalance and modality quantity imbalance have a substantial impact on the regret bound and ultimately degrade learning performance. To mitigate these effects, dedicated algorithmic solutions are necessary. In this section, we introduce the Modality Quantity and Quality Rebalancing (QQR) algorithm, which leverages prototype learning techniques \cite{yang2023fedhap} to enable effective and efficient rebalancing of modality quality and quantity during the learning process. The proposed QQR algorithm is detailed in Algorithm~\ref{alg: qqr} and comprises three main components: Online Global Prototype Construction (OGPC), Prototypical Quantity Rebalancing (PNR), and Prototypical Quality Rebalancing (PLR). The following subsections provide a detailed explanation of each component. 

\begin{algorithm}
    \caption{QQR Algorithm} \label{alg: qqr} 
    \begin{algorithmic}[1]
        \For {$t = 1, 2, ..., T - 1$}
            \State \textbf{Client Side:}
            \For {$k = 1, 2, ..., K$}
                \If { Client satisfies the condition $\Phi_{g}$}
                    \State Calculates the local prototype $v_{k, c}^{t, m}$  via Eq.~\ref{local_pro}.
                    \State Integrates and uploads the $v_{k, c}^{t, m}$ to server.
                \EndIf
                \State Downloads and utilizes the collection $\bar{\mathcal{V}}^t$.
                \State \textbf{Prototypical Quantity Rebalancing:}
                \State Builds the $\tilde{Z}_k^{t, m}$ via Eq.~\ref{pnr1} and Eq.~\ref{pnr2}.
                \State \textbf{Prototypical Quality Rebalancing:}
                \State Utilizes the PLR loss $\mathcal{L}_{PLR}$ in Eq.~\ref{losspce} for updating.
            \EndFor
            \State \textbf{Server Side:}
            \State Collects the local prototype $v_{k, c}^{t, m}$  from clients.
            \State Calculates the $v_{c}^{t, m}$ via Eq.~\ref{tgp}.
            \State Updates the $\bar{v}_{c}^{t, m}$ via Eq.~\ref{pgp}.
            \State Aggregates and distributes the $\bar{\mathcal{V}}^t$ to all clients.
        \EndFor
    \end{algorithmic}
\end{algorithm}
\vspace{-5pt}

\begin{figure*}[htbp]
    \centering
    \includegraphics[width=0.8\textwidth]{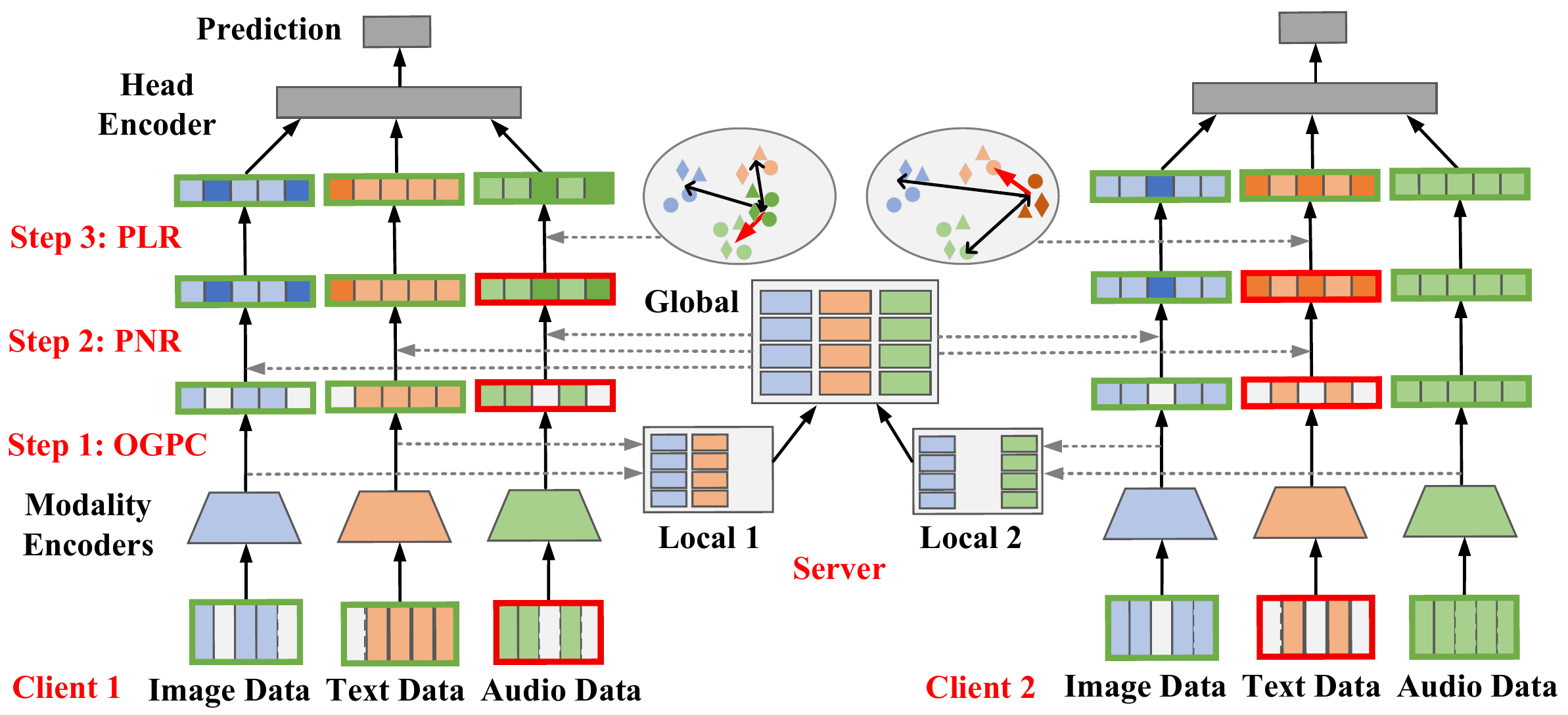} 
    \caption{Illustration of QQR: The QQR framework comprises three main components. First, OGPC generates local prototypes at each client and aggregates them at server to produce cumulative global prototypes. Second, PNR leverages the cumulative global prototype collection to substitute missing feature embeddings, thereby addressing modality quantity imbalance. Finally, PLR utilizes both the cumulative global prototype collection and the relative relationships among modalities to correct modality quality imbalance. The PNR and PLR procedures can be executed either independently or concurrently.}
    \vspace{-10pt}
    \label{QQR}
\end{figure*}

\subsection{Online Global Prototype Construction}
The global prototype construction process is designed to maintain long-term global prototypes, which are made available to clients as needed during the quantity and quality rebalancing stages. To progressively approximate the true class representations, these prototypes are updated at each global round $t$. Let $c \in  \left\{1, \dots, C \right\}$ denote the class labels. The local prototype is defined as the average of the feature representations extracted by the modality encoder from the available samples of a given class that meet the normal quality criterion. This can be formally expressed as follows:
\begin{align}
 &v_{k, c}^{t, m} = \frac{1}{ \sum_c  P_{k, c}^{t, m} } \sum_{n \in \mathcal{N}(\Phi_{g})}  \theta^m(x_{k, n}^{t, m} ) \label{local_pro}  \\
 &\textit{where} \quad \Phi_{g} = \left\{ p_{k, n}^{t, m}= 1, q_{k}^{t, m} =1,  y_{k, n}^t = c\right\} 
\end{align}
To ensure fairness among prototypes for different modalities, we introduce two additional principles. First, local prototypes derived from various modalities should conform to a unified structure. This ensures that the features extracted by the modality encoders, denoted as $\theta^m(x_{k, n}^{t, m} )$, remain consistent across modalities, even if the original data formats differ. By passing data through their respective encoders, the resulting features can be structurally aligned in a shared representation space. Second, to maintain comparability and training stability, local prototypes across modalities should be normalized, ensuring their magnitudes are consistent throughout the learning process.

At the end of each training round, every client generates local prototypes for each modality and transmits them to the server. Upon receiving these local prototypes, the server constructs an instantaneous global prototype for modality $m$ and class $c$ at global round $t$ using the following formulation:
\begin{align}
v_{c}^{t, m} = \frac{1}{K} \sum_{k \in \mathcal{K}}  v_{k, c}^{t, m} \label{tgp}
\end{align}
Given the nature of online learning, data collected at each global round may be biased and may not cover all class categories. According to the law of large numbers, as the number of samples increases, the sample mean tends to converge to the true population mean. To address this issue, it is essential to maintain a cumulative global prototype that continuously captures the semantic representation of each class for every modality across training rounds. The cumulative global prototype for modality $m$ and class $c$ at global round $t$ is defined as follows:
\begin{align}
\bar{v}_{c}^{t, m} = \frac{(t-1)\bar{v}_{c}^{t-1, m} + {v}_{c}^{t, m}}{t} \label{pgp} 
\end{align}
The cumulative global prototype $\bar{v}_{c}^{t, m}$ is continuously updated and maintained on the server, this will serve as a key component for subsequent quantity and quality rebalancing procedures. The server retains the full set of cumulative global prototypes, organized by modality and class, denoted as $\bar{\mathcal{V}}^{t}$, and defined as follows:
\begin{align}
\bar{\mathcal{V}}^{t}  =  
\begin{bmatrix}
\bar{v}_{1}^{t, 1} & \dots & \bar{v}_{c}^{t, 1} & \dots & \bar{v}_{C}^{t, 1} \\
\dots & \dots & \dots & \dots & \dots  \\
\bar{v}_{1}^{t, m} & \dots & \bar{v}_{c}^{t, m} & \dots & \bar{v}_{C}^{t, m} \\
\dots & \dots & \dots & \dots & \dots  \\
\bar{v}_{1}^{t, M} & \dots & \bar{v}_{c}^{t, M} & \dots & \bar{v}_{C}^{t, M} \\
\end{bmatrix}
\end{align}

This collection is continuously updated over successive training rounds, maintained on the server, and shared with clients as needed. In the following sections, we describe how clients utilize the cumulative global prototype collection to perform modality quantity rebalancing and modality quality rebalancing, respectively.

\subsection{Prototypical Quantity Rebalancing }
In this phase, each client executes the PNR algorithm. The core objective of PNR is to address modality quantity imbalance by compensating for discrepancies in sample counts across modalities through prototype-based augmentation. The overall process of the PNR algorithm is illustrated in Step 2 of Fig.~\ref{QQR}. Upon receiving the cumulative global prototype collection from the server, each client identifies its own modality imbalance scenario and generates virtual feature representations to substitute for missing features in missing data samples. This process is described in detail below:
\begin{align}
\tilde{Z}_k^{t, m} = \left [ \tilde{z}_{k, 1}^{t, m}, \dots, \tilde{z}_{k, n}^{t, m}, \dots, \tilde{z}_{k, N}^{t, m}\right ]  \label{pnr1}
\end{align}
where $\tilde{z}_{k, 1}^{t, m}$ denotes the corresponding feature representations for data sample $n$ of modality $m$ at client $k$ during round $t$. The value of $\tilde{z}_{k, n}^{t, m}$  is determined based on the availability status of the sample, as outlined below:
\begin{align}
 \tilde{z}_{k, n}^{t, m} = \left\{\begin{matrix}
 \theta^m(x_{k, n}^{t, m} ) &  \textit{if} \quad p_{k, n}^{t, m}= 1 \\
 \bar{v}_{c(k, n)}^{t, m} &  \textit{if} \quad p_{k, n}^{t, m}= 0 \\
\end{matrix}\right. \label{pnr2}
\end{align}
In this context, $c(k, n)$ denotes the class label associated with the $n$-th data sample of client $k$. The predicted label is then obtained by applying the head encoder $\theta^0$ to the aggregated feature representations across all modalities: 
\begin{align}
\theta^0 \left (Z_k^{t, 1}, \dots, \tilde{Z}_k^{t, m}, \dots Z_k^{t, M}  \right )
\end{align}
After this step, the standard MMO-FL training procedure, as outlined in the previous section, is carried out. This approach effectively mitigates the impact of modality quantity imbalance by aligning with the theoretical objective of increasing $N_{min}$, as discussed in our earlier analysis.

\subsection{Prototypical Quality Rebalancing}
In this phase, the client will execute the PLR algorithm. The core idea of the PLR algorithm is to alleviate modality quality imbalance by refining the representations of lower-quality modalities. This is achieved by aligning these representations toward their expected semantic positions, guided by the relative distances among modality-specific prototypes. This prototype-based alignment facilitates effective quality correction across modalities. The overall mechanism of PLR is illustrated in Step 3 of Fig.~\ref{QQR}. Inspired by prior work \cite{fan2023pmr}, we introduce the Prototype Cross Entropy (PCE), a loss function that captures the discrepancy between the feature representation of a given modality and its corresponding prototype in each global round. The PCE is formally defined as follows:
\begin{align}
\mathcal{L}_{PCE}^m (f_t) = \mathbb{E}  \left [ -\log \frac{\exp (- d (\tilde{z}_{k, n}^{t, m}, \bar{v}_{c(k, n)}^{t, m}))}{\sum_{c = 1}^C \exp (- d (\tilde{z}_{k, n}^{t, m}, \bar{v}_{c}^{t, m}))} \right ]
\end{align}
Here, $d(\cdot, \cdot)$ denotes the distance function, which is instantiated as the Euclidean distance. The Prototypical Quality Rebalancing loss, denoted by $\mathcal{L}_{PLR}$, is defined as a weighted combination of the standard cross-entropy (CE) loss and PCE loss. It is given as follows:
\begin{align}
\mathcal{L}_{PLR} =  \mathcal{L}_{CE} + \beta (1 -q_{k}^{t, m}) \mathcal{L}_{PCE}^m  \label{losspce}
\end{align}
In this formulation, $\beta$ is a hyper-parameter to control the degree of modulation. The indicator variable $q_{k}^{t, m}$ represents the quality status of modality $m$ for client $k$ at round $t$. If the modality is identified as low quality, $q_{k}^{t, m} = 0$, the loss function incorporates the PCE term to reduce the impact of the degraded data. Otherwise, when $q_{k}^{t, m} = 1$, no adjustment is applied. Each client dynamically adjusts its training loss based on the quality status of its local modalities.

The steps described above form the core components of the proposed QQR algorithm. In the following, we present additional considerations and potential extensions aimed at enhancing the algorithm’s effectiveness. 

\textbf{Communication Efficiency Strategy:} Within the QQR algorithm, the cumulative global prototype functions as an approximate representation that captures the most salient and discriminative features of the original data. However, high numerical precision may not be necessary for this purpose. To further reduce communication costs, quantization techniques can be employed when transmitting the local prototypes. In our experimental evaluation, we assess the impact of this strategy on learning performance under both modality quantity and quality imbalance scenarios.

\section{Experiment}
In this section, we will experimentally evaluate the performance of the MMO-FL algorithm under conditions of modality quantity and quality imbalance. The experiments were conducted on an Ubuntu 18.04 PC equipped with an Intel Core i7-10700KF 3.8GHz CPU and a GeForce RTX 3070 GPU. The detailed experimental settings are described below.

\subsection{Datasets}
To simulate MMO-FL in IoT scenarios, we employ two real-world multimodal datasets: UCI-HAR and MVSA-Single. A detailed description of each dataset is provided below.

\textbf{UCI-HAR}: The UCI-HAR dataset is a well-established benchmark for human activity recognition. It contains a total of 10,299 samples collected from 30 individuals (average age 24), each performing one of six predefined activities: walking, walking upstairs, walking downstairs, sitting, standing, and lying down. Data were recorded using smartphone-based inertial sensors, including a 3-axis accelerometer and gyroscope, capturing motion signals in three dimensions. The sensors sampled at a rate of 50 Hz, producing 128 readings per axis within each time window. This dataset is used in our experiments to investigate sensor-based human activity recognition using 3D motion data. 

\textbf{MVSA-Single}: The MVSA-Single dataset is a benchmark resource widely used in multimodal sentiment analysis, emphasizing the joint modeling of textual and visual information from social media posts. It contains 5,129 samples, each comprising an image paired with accompanying text. Every image-text pair is annotated with one sentiment categories: Positive, Neutral, or Negative, representing the overall emotional expression conveyed through the combined modalities.

The original datasets are inherently static and tailored for conventional offline learning settings. To support the demands of online learning, it is necessary to convert them into dynamic datasets. The specific procedure used to achieve this transformation is outlined in detail in the following section.

\subsection{Online Data Generation}
In the context of online learning, the training dataset must evolve over time, with new data acquired at the beginning of each global round. To ensure that the model receives adequate data for effective training, an initial dataset is gathered at the outset of the learning process. Given the differences in size and characteristics between the UCI-HAR and MVSA-Single datasets, we adopt dataset-specific strategies for simulating online data generation in each case.

\textbf{UCI-HAR}: For the UCI-HAR dataset, the training setup includes five clients. Each client is initially allocated 2000 samples, drawn from a Dirichlet distribution with ratio $\alpha$ to simulate non-IID long-term data sources. During each global round, clients maintain an evolving local dataset consisting of 500 samples. To mimic online data collection, 20 new samples are added from the long-term pool at every round, while the oldest 20 samples are simultaneously discarded. This continuous replacement strategy ensures that local data remains dynamic.

\textbf{MVSA-Single}: For the MVSA-Single dataset, the training setup consists of five clients. Each client is initially provided with 1500 samples drawn using a Dirichlet distribution with ratio $\alpha$ to reflect a non-IID long-term data distribution. During training, each client maintains an evolving online dataset of 800 samples. At each global round, 20 new samples are added from the long-term data pool, and the oldest 20 samples are removed, ensuring a constant dataset size. This streaming update strategy supports the dynamic nature of online learning by continuously refreshing the local data.

\subsection{Imbalance Simulation}
\textbf{Quantity Imbalance Simulation}: To evaluate the impact of modality quantity imbalance, we introduce two parameters: $\lambda_p$, named intra-round quantity imbalance ratio, which denotes the proportion of missing data relative to the total dataset, and $\lambda_r$, named inter-round quantity imbalance ratio, which represents the proportion of global rounds affected by quantity imbalance relative to the total number of rounds. A higher $\lambda_p$ i indicates a greater extent of data loss during the collection process, while a higher  $\lambda_r$ reflects a more frequent occurrence of quantity imbalance throughout the training period. To isolate this effect and eliminate confounding factors, the experimental design includes the following constraints. First, we assume that all clients experience the same value of $\lambda_p$ across all global rounds. Second, given that the datasets used in the experiments consist of two modalities, we restrict each instance of missing data to affect only one modality per modality pair. In the following experiments, we systematically examine the impact of varying degrees of modality quantity imbalance on learning performance by controlling the value of $\lambda_p$ and $\lambda_r$.

\textbf{Quality Imbalance Simulation}: To evaluate the impact of modality quality imbalance, we introduce the parameter inter-round quality imbalance ratio $\delta_r$ to represent the proportion of rounds in which the client collects low-quality data out of the total number of learning rounds. A higher value of $\delta_r$ indicates that a client collects low-quality data in a greater number of global rounds. To control variables and isolate the effect of modality quality imbalance, the experimental design follows two rules. First, all clients are assigned the same $\delta_r$ value across training rounds, and second since the dataset contains only two modalities, at most one modality is permitted to collect low-quality data in any given round. In the subsequent experiments, we systematically investigate how varying degrees of modality quality imbalance affect learning performance by adjusting the value of $\delta_r$. 

\subsection{Model Details}
In the following, we detail the model architectures and key parameters used in our experiments, presented separately for the two datasets.

\textbf{UCI-HAR}: The dataset includes two distinct modalities: accelerometer and gyroscope signals, each requiring a dedicated encoder architecture. For the accelerometer data, we use a CNN-based model as the encoder, featuring five convolutional layers followed by a fully connected layer. For the gyroscope data, we use an LSTM-based model with one LSTM layer and one fully connected layer. Both encoders produce 128-dimensional feature vectors. A shared classification head, consisting of two fully connected layers, is used to generate the final predictions. The learning rate is 0.08, with a decay factor of 0.95 until it reaches 0.001. The batch size is  128.

\textbf{MVSA-Single}: The dataset includes two distinct modalities: text and
image data, which require separate encoder networks tailored to their data types. For the image data, a four-layer CNN is utilized, with its final layer adjusted to yield a 128-dimensional embedding. For text data, a two-layer LSTM network is employed, also configured to output a 128-dimensional feature vector. The modality-specific features are subsequently processed by a shared header model consisting of two fully connected layers. The learning rate initialized at 0.01, which exponentially decays by a factor of 0.99 until it reaches a minimum of 0.001. The batch size used throughout is 128.

\subsection{Benchmarks} 
n our experimental evaluation, we consider several baseline methods to benchmark the performance of our proposed approach. Since the challenges of modality quality and quantity imbalance in the context of MMO-FL remain unexplored, there are no well-established baselines available for direct comparison. To bridge this gap, we incorporate both optimistic and pessimistic bounds on learning performance. A detailed explanation of each benchmark is provided below.

To evaluate the effectiveness of the PNR algorithm, we consider the following three benchmarks for comparison.

\textbf{Full Collection (FC)}. In this scenario, each client is able to successfully collect complete modality information for all data samples, resulting in no quantity imbalance. Consequently, the entire dataset can be fully utilized for training, serving as an upper bound for achievable learning performance. 

\textbf{Incomplete Subset (IS)}. In this scenario, each client collects modality-specific data based on its actual operating conditions. Due to variations in sensor performance, some data samples may contain only partial modality information, leading to modality quantity imbalance. As a result, training can only be performed on the subset of data that contains all modalities after modality alignment, which serves as a lower bound for evaluating learning performance. 

\textbf{Zero Padding (ZP)}. In this scenario, to maximize data utilization based on the incomplete set, we apply zero-padding to compensate for the missing modality components. This allows training to be conducted using the available modality information from all collected data samples. 

To assess the effectiveness of the PLR algorithm, we compare it against the following two benchmarks.

\textbf{Pristine Quality (PQ)}. In this scenario, all modality-specific sensors on the client operate reliably, consistently collecting normal quality data without exhibiting temporal variations in data quality. 

\textbf{Bare Quality (BQ)}. In this scenario, certain modality-specific sensors on the client produce low-quality data due to hardware limitations or malfunctions. Furthermore, this degraded data is not corrected or enhanced through any compensation methods. Consequently, the client proceeds with local model training using the degraded input.

\subsection{Simulation Results}
In this section, we evaluate the proposed MMO-FL framework under modality quantity and quality imbalance scenarios. To enable controlled analysis and prevent variable cross-interference, we first examine the effect of quantity imbalance and assess the performance of the PNR algorithm. We then investigate the quality imbalance problem and evaluate the PLR algorithm. Next, we perform ablation studies to analyze performance under different parameter settings, considering the combined effects of quantity and quality imbalance. Finally, we assess the performance impact and communication efficiency achieved through quantized prototype uploads. All reported results are averaged over 10 random runs.

\begin{figure}[h]
\centering
\vspace{-15pt}
\subfloat[Test Accuracy with UCI-HAR]{\includegraphics[width=0.49\linewidth]{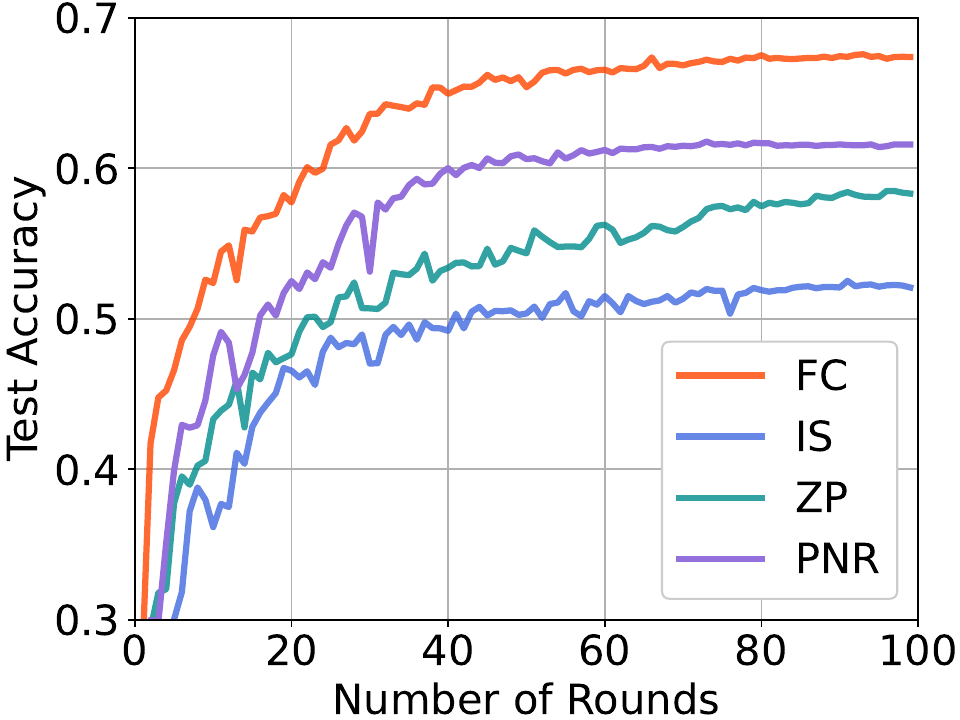}} 
\subfloat[Test Accuracy with MVSA-Single ]{\includegraphics[width=0.49\linewidth]{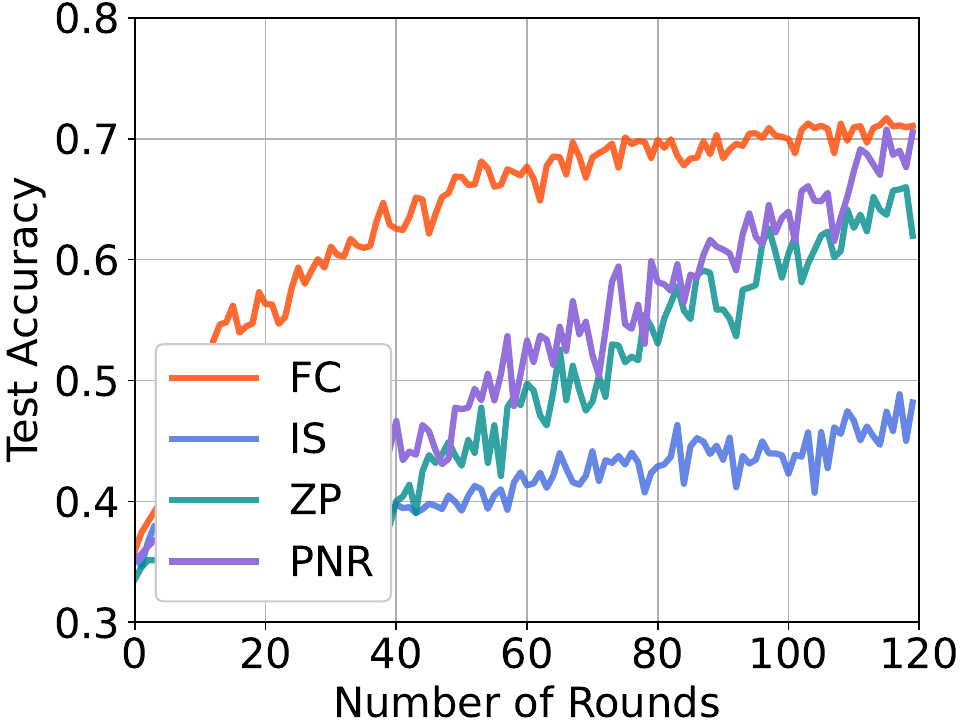}} 
\caption{Performance comparison of proposed algorithm and benchmarks with quantity imbalance.}   \label{mmofl-pnr}
\vspace{-10pt}
\end{figure}

\textbf{Performance Comparison (Quantity Imbalance)}. We begin by evaluating the learning performance with the impact of quantity imbalance within the MMO-FL scenario. The test accuracy comparison between proposed PNR and benchmarks are shown in Fig.~\ref{mmofl-pnr}(a) and Fig.~\ref{mmofl-pnr}(b) based on UCI-HAR dataset with configuration $[\lambda_r = 0.5, \lambda_p = 0.9, \alpha = 1]$ and MVSA-Single dataset with $[\lambda_r = 0.5, \lambda_p = 0.9, \alpha = 5]$, respectively. Based on the above figures, we get several key observations. First, the FC setting achieved the highest performance because it includes all data and does not experience quantity imbalance. In contrast, the IS setting yielded the lowest performance, indicating that quantity imbalance without mitigation can significantly degrade learning. Second, the ZP approach improved performance compared to IS, indicating that zero-padding missing inputs is more effective than completely omitting them during training. Finally, the proposed PNR algorithm outperformed ZP, as replacing missing data with cumulative global prototypes instead of zeros provides better inter-modal correspondence and results in superior learning outcomes. The above findings are consistent across both datasets. 

\begin{figure}[h]
\centering
\vspace{-15pt}
\subfloat[Test Accuracy with UCI-HAR ]{\includegraphics[width=0.49\linewidth]{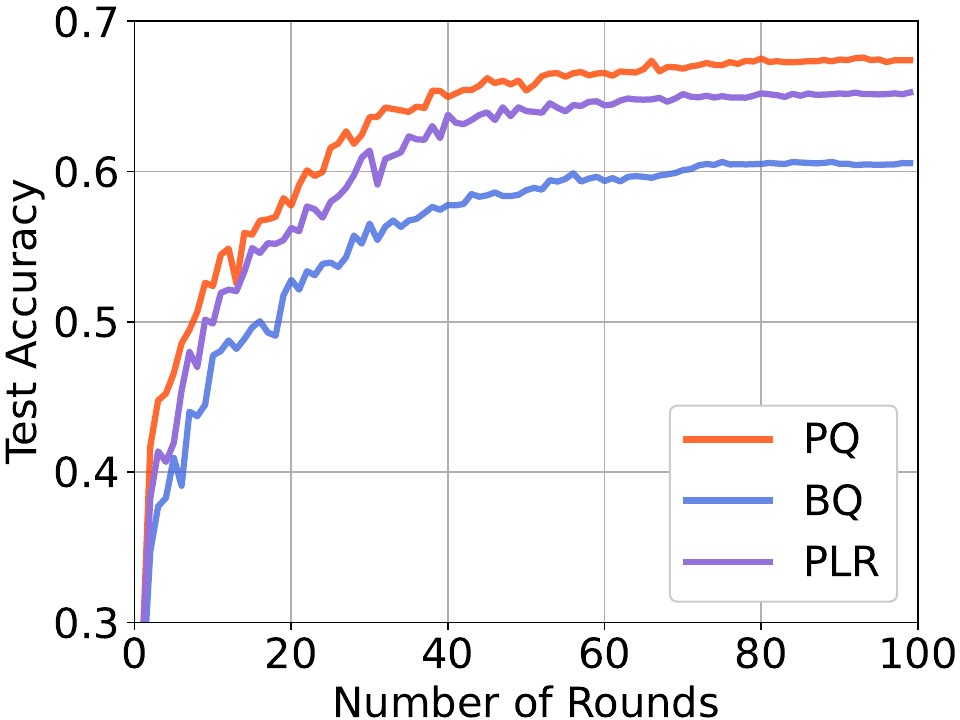}} 
\subfloat[Test Accuracy with MVSA-Single ]{\includegraphics[width=0.49\linewidth]{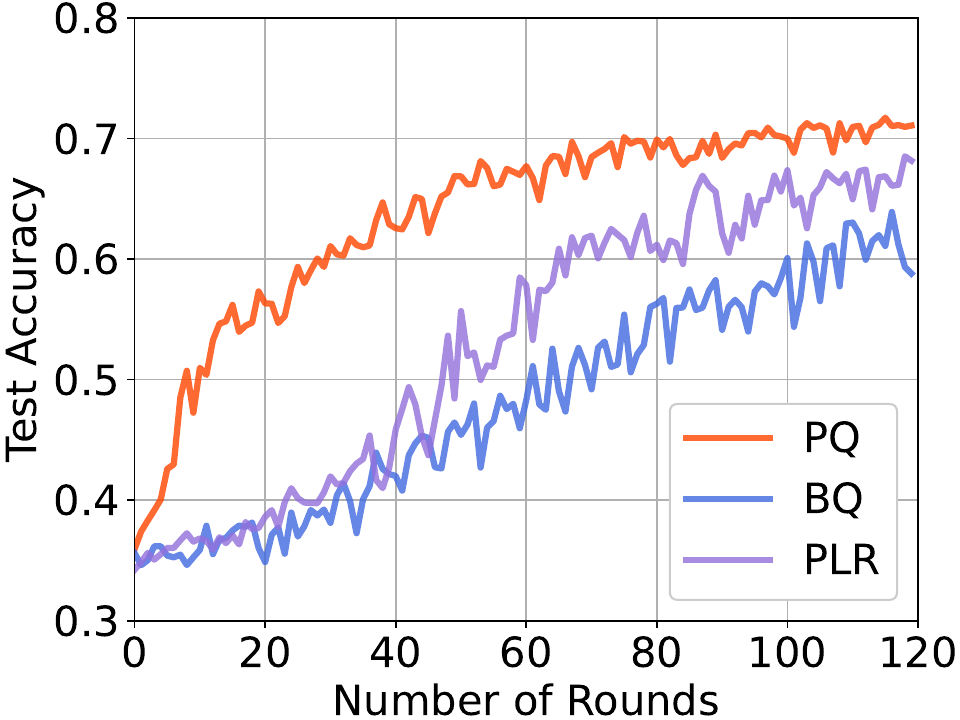}} 
\caption{Performance comparison of proposed algorithm and benchmarks with quality imbalance.}   \label{mmofl-plr}
\vspace{-15pt}
\end{figure}

\textbf{Performance Comparison (Quality Imbalance)}. In this part,  we will evaluate the learning performance with the impact of quality imbalance within the MMO-FL scenario. To emulate low-quality data conditions, we inject high-intensity additive white Gaussian noise (AWGN) with a SNR of 10 dB into the original dataset. The test accuracy comparison between proposed PLR and benchmarks are shown in Fig.~\ref{mmofl-plr}(a) and Fig.~\ref{mmofl-plr}(b) based on UCI-HAR dataset with configuration $[\delta_r = 0.5, \beta = 0.5, \alpha = 1]$ and MVSA-Single dataset with $[\delta_r = 0.5, \beta = 0.5, \alpha = 5]$, respectively. Based on the simulation results, we can summarize the following findings. First, a comparison between PQ and BQ shows that adding substantial noise to multimodal data produces low-quality inputs that markedly degrade learning performance. Furthermore, the proposed PLR algorithm effectively mitigates the negative impact of low-quality data, with performance improvements becoming more pronounced as the number of training rounds increases. This improvement is primarily attributed to the PCE loss, which reduces the distance between the feature extractor of low-quality data and cumulative global prototypes, thereby gradually diminishing the adverse effects of quality imbalance.

Based on the above performance comparisons, which confirm the effectiveness of the proposed PNR and PLR algorithms under the MMO-FL setting, we next conduct a series of ablation studies to examine the specific impact of key parameters on learning performance.

\begin{figure}[h]
\centering
\vspace{-15pt}
\subfloat[Test Accuracy with UCI-HAR ]{\includegraphics[width=0.49\linewidth]{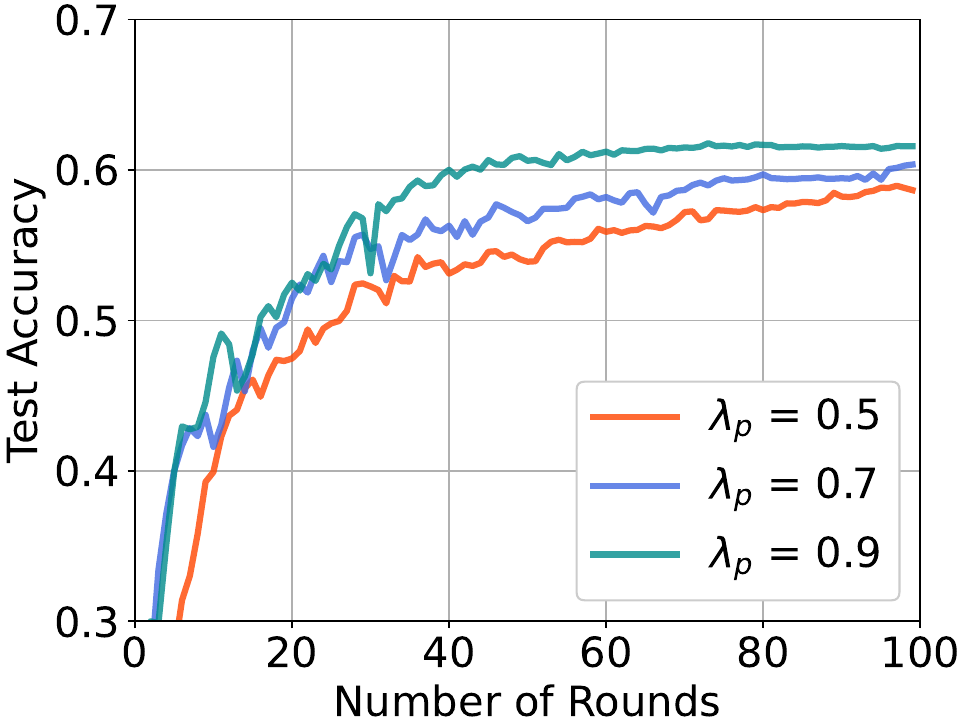}} 
\subfloat[Test Accuracy with MVSA-Single ]{\includegraphics[width=0.49\linewidth]{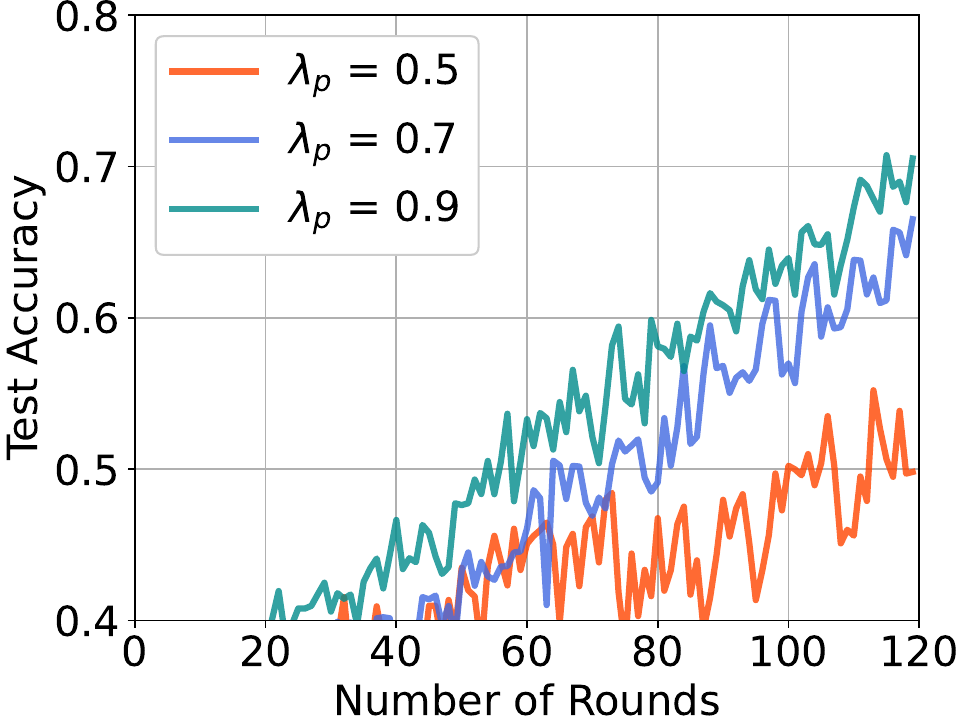}} 
\caption{Performance comparison of proposed algorithm and benchmarks with different $\lambda_p$ ratio.}   \label{mmofl-intra-n}
\vspace{-10pt}
\end{figure}

\textbf{Impact of Ratio $\lambda_p$ (Quantity Imbalance)}. In this part, we will explore the effects with the intra-round quantity imbalance ratio $\lambda_p$ on learning performance. The results for the UCI-HAR and MVSA-Single datasets are presented in Fig.~\ref{mmofl-intra-n}(a) and Fig.~\ref{mmofl-intra-n}(b), respectively. Several key observations can be made: First, as $\lambda_p$ increases, learning performance improves because more data is available and fewer quantity imbalances need to be compensated by the PNR algorithm. In contrast, when $\lambda_p$ decreases substantially (e.g., $\lambda_p = 0.5$), the MVSA-Single dataset shows a pronounced drop in performance. This decline results from severe data loss, where even with compensation, achieving the desired results becomes difficult. The degree of degradation varies across datasets.

\begin{figure}[h]
\centering
\vspace{-15pt}
\subfloat[Test Accuracy with UCI-HAR ]{\includegraphics[width=0.49\linewidth]{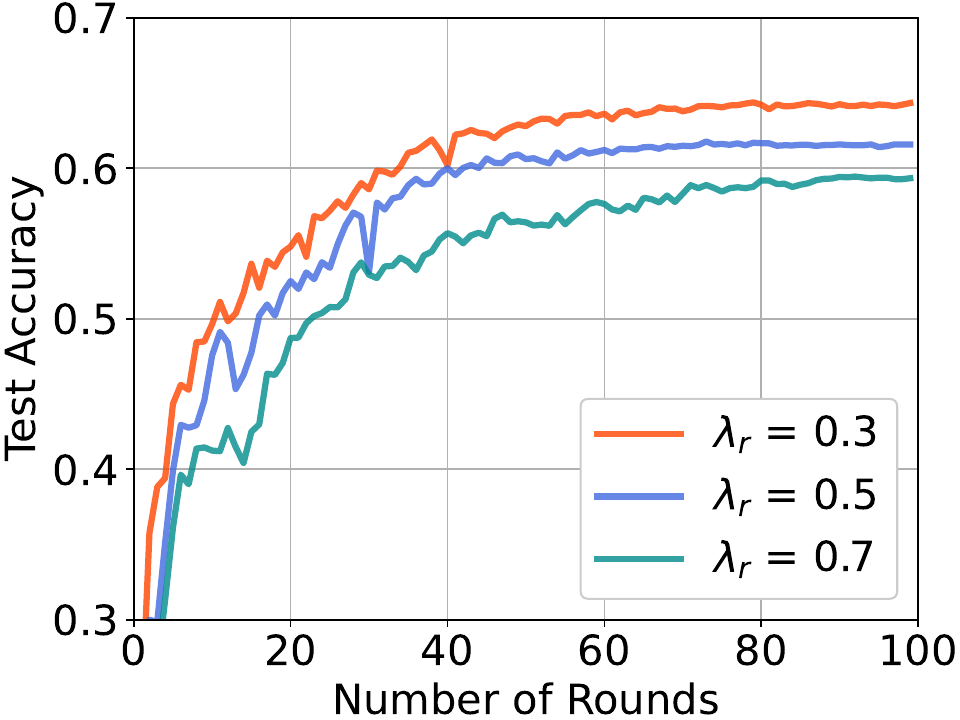}} 
\subfloat[Test Accuracy with MVSA-Single ]{\includegraphics[width=0.49\linewidth]{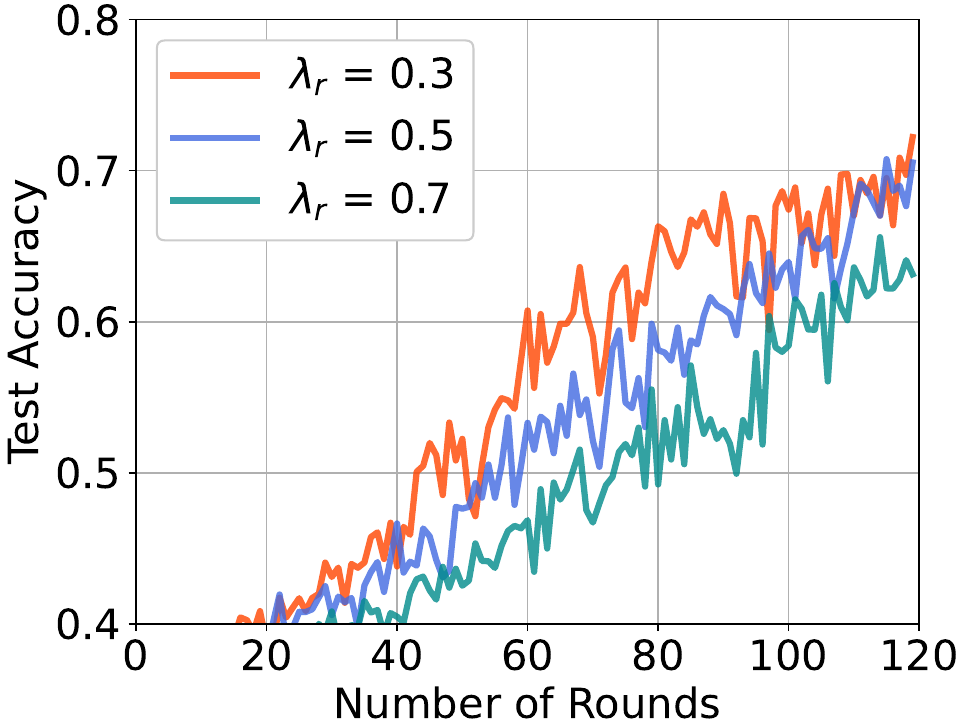}} 
\caption{Performance comparison of proposed algorithm and benchmarks with different $\lambda_r$ ratio.}   \label{mmofl-inter-n}
\vspace{-15pt}
\end{figure}

\textbf{Impact of Ratio $\lambda_r$ (Quantity Imbalance)}. In this part, we will explore the effects with the inter-round quantity imbalance ratio $\lambda_r$ on learning performance. The results for the UCI-HAR and MVSA-Single datasets are presented in Fig.~\ref{mmofl-inter-n}(a) and Fig.~\ref{mmofl-inter-n}(b), respectively. We have the following findings: First, smaller values of $\lambda_r$ yield better learning performance because the proportion of rounds with quantity imbalance is lower, allowing more rounds to collect complete modal data. When $\lambda_r$ reaches 0.7, both datasets exhibit a significant drop in learning performance. This decline is due to the high frequency of quantity imbalance, where even with compensation, sustaining strong learning performance remains difficult. This phenomenon is consistently observed across both datasets.

\begin{figure}[h]
\centering
\vspace{-15pt}
\subfloat[Test Accuracy with UCI-HAR]{\includegraphics[width=0.49\linewidth]{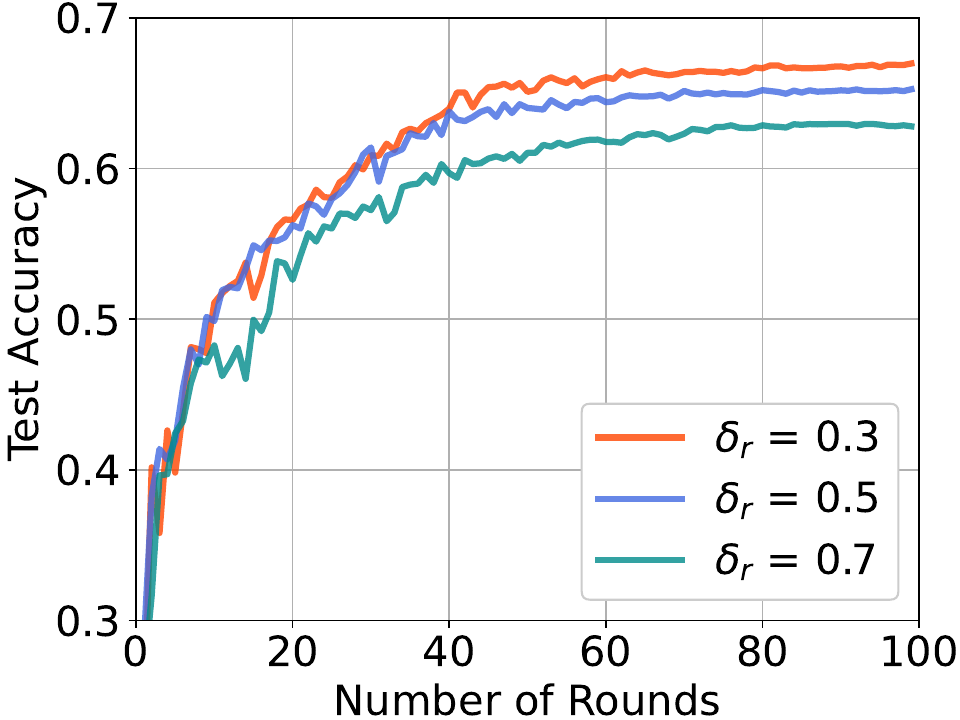}} 
\subfloat[Test Accuracy with MVSA-Single]{\includegraphics[width=0.49\linewidth]{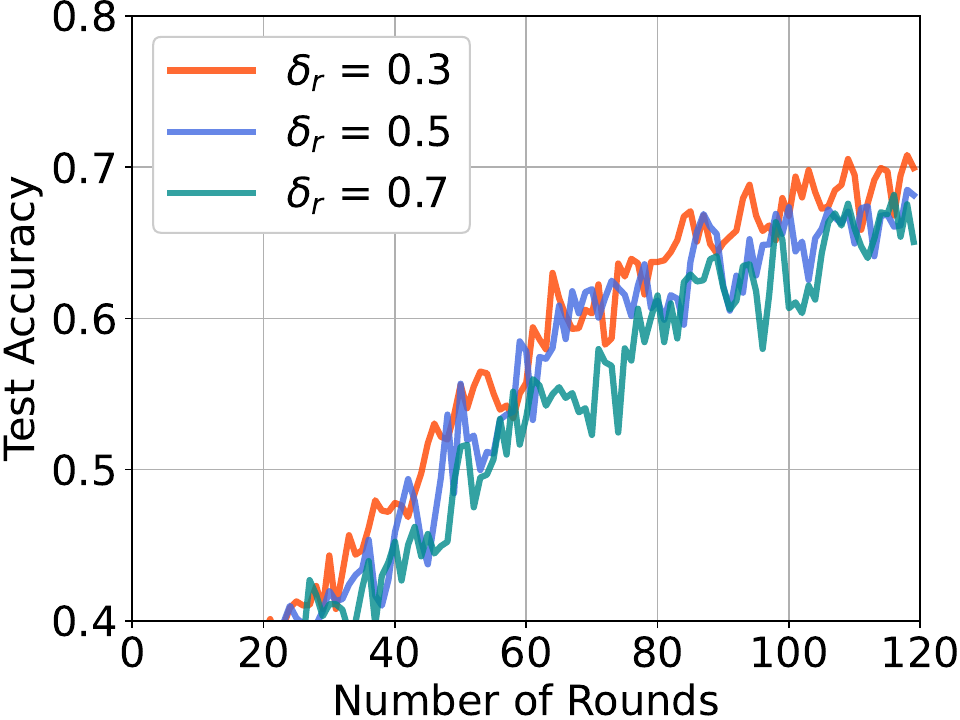}} 
\caption{Performance comparison of proposed algorithm and benchmarks with different $\delta_r$ ratio.}   \label{mmofl-inter-l}
\vspace{-10pt}
\end{figure}

\textbf{Impact of Ratio $\delta_r$ (Quality Imbalance)}. In this part, we will explore the effects with the inter-round quality imbalance ratio $\delta_r$ on learning performance. The results for the UCI-HAR and MVSA-Single datasets are presented in Fig.~\ref{mmofl-inter-l}(a) and Fig.~\ref{mmofl-inter-l}(b), respectively. We observe that learning performance decreases as $\delta_r$ increases. When $\delta_r$ is smaller, fewer data samples are affected by quality imbalance. This not only increases the number of updates to the cumulative global prototypes but also reduces the number of rounds in which the PCE is used to mitigate quality imbalance. Together, these factors contribute to improved learning performance.

Next, we assess the efficiency of the proposed PNR and PLR algorithms by applying quantized upload strategies to further reduce communication overhead and enhance their practicality. We evaluate performance under quantity imbalance and quality imbalance scenarios separately.

\begin{figure}[h]
\centering
\vspace{-15pt}
\subfloat[Test Accuracy with UCI-HAR ]{\includegraphics[width=0.49\linewidth]{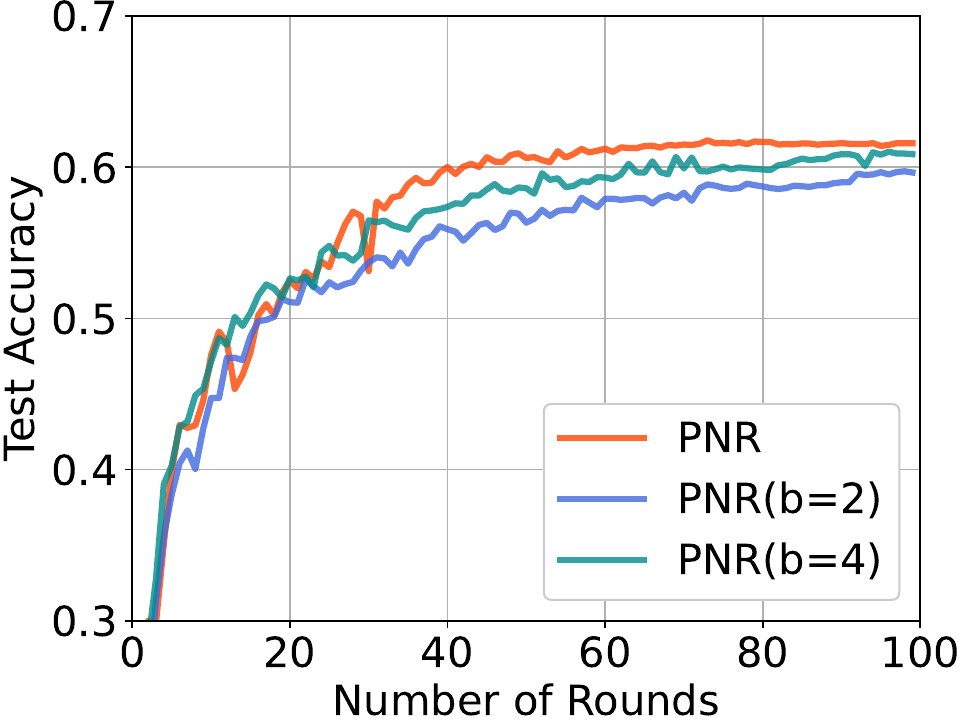}} 
\subfloat[Test Accuracy with MVSA-Single ]{\includegraphics[width=0.49\linewidth]{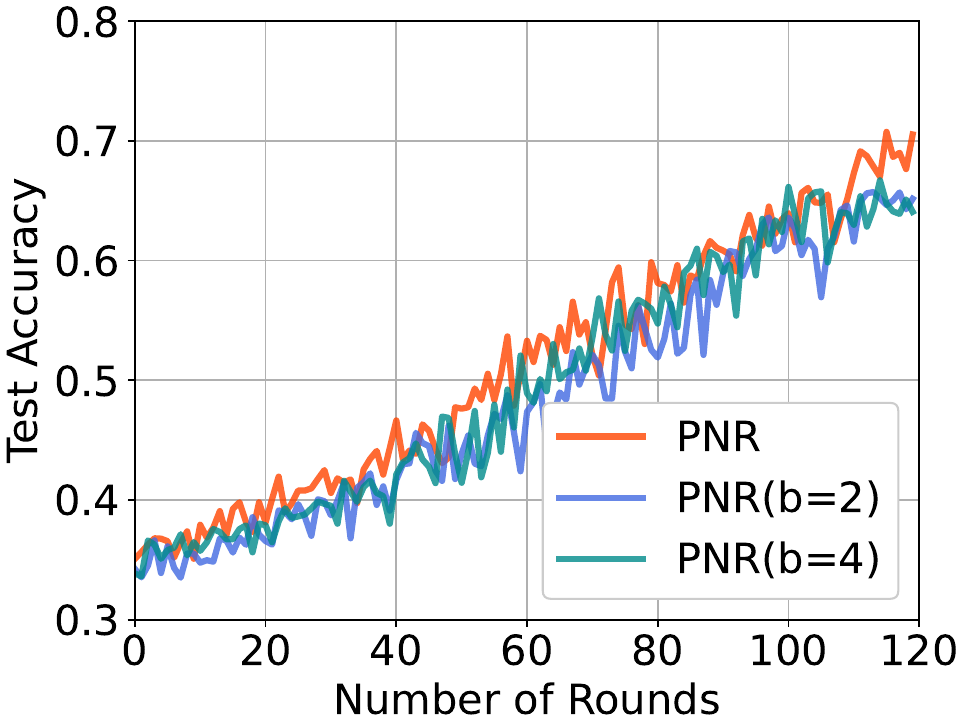}} \\
\subfloat[Test Accuracy with UCI-HAR by communication cost ]{\includegraphics[width=0.49\linewidth]{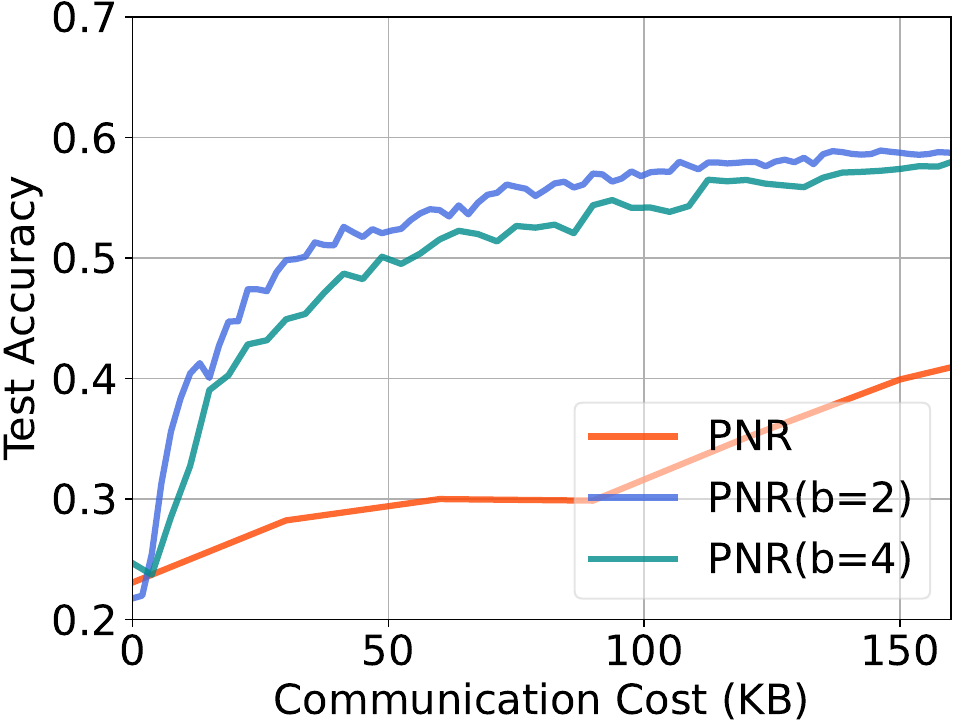}} 
\subfloat[Test Accuracy with MVSA-Single by communication cost]{\includegraphics[width=0.49\linewidth]{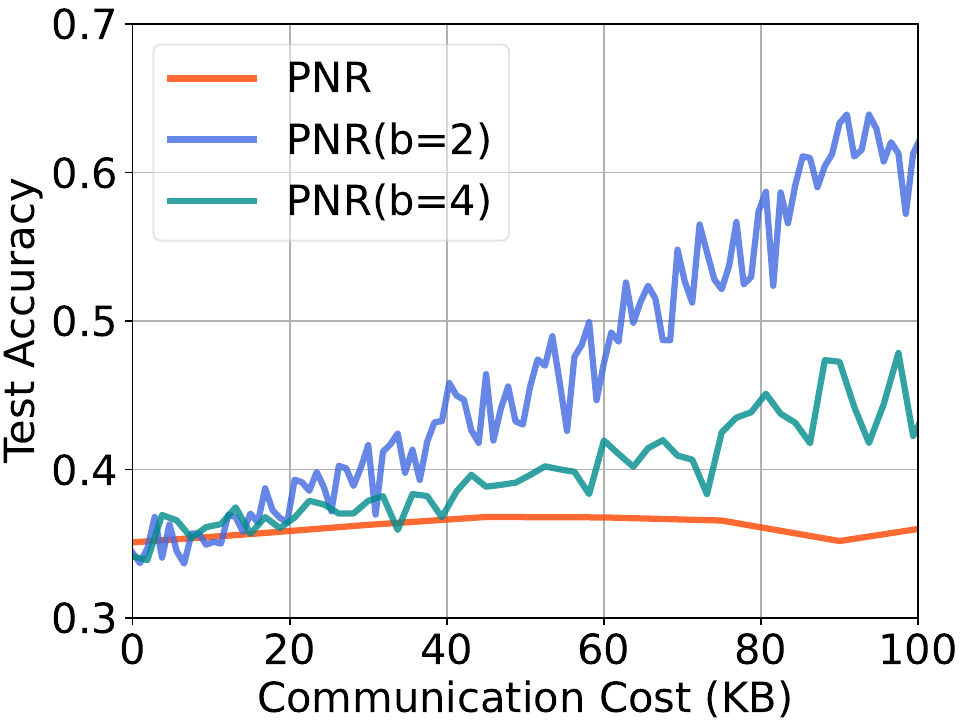}}
\caption{Performance evaluation of the proposed algorithm under varying quantization levels and quantity imbalance.}   \label{mmofl-quanti-qnr}
\vspace{-10pt}
\end{figure}

\textbf{Impact of Quantized Upload (Quantity Imbalance)}. In this part, we examine the effect of applying quantization to local prototypes during the PNR upload process as a means of further reducing communication overhead. We employ a uniform scalar quantizer, where the parameter $b$ represents the number of bits per component used for compression. Without quantization, $b = 32$, corresponding to full-precision transmission. Applying quantization with $b$ bits results in $2^b$ quantization levels. We evaluate the performance impact for $b \in [2, 4]$. The results, as illustrated in Fig.~\ref{mmofl-quanti-qnr}(a) and Fig.~\ref{mmofl-quanti-qnr}(b) by round, and Fig.~\ref{mmofl-quanti-qnr}(c) and Fig.~\ref{mmofl-quanti-qnr}(d) by communication cost, illustrate the trade-off between model performance and communication efficiency. As expected, larger $b$ values yield better performance but incur higher communication costs. Notably, prototypes remain effective at low precision, and quantization leads to only a minor performance drop while substantially reducing communication overhead. This confirms that PNR can be efficiently integrated into the MMO-FL framework without imposing a significant communication burden. Similarly, the PLR results, presented in Fig.~\ref{mmofl-quanti-qlr}(a) and Fig.~\ref{mmofl-quanti-qlr}(b) by round, and Fig.~\ref{mmofl-quanti-qlr}(c) and Fig.~\ref{mmofl-quanti-qlr}(d) by communication cost, exhibit the same trade-off pattern. As with PNR, low-precision quantization in PLR achieves substantial communication savings with minimal performance degradation, leading to the same simulation conclusion.

\begin{figure}[h]
\centering
\vspace{-15pt}
\subfloat[Test Accuracy with UCI-HAR ]{\includegraphics[width=0.49\linewidth]{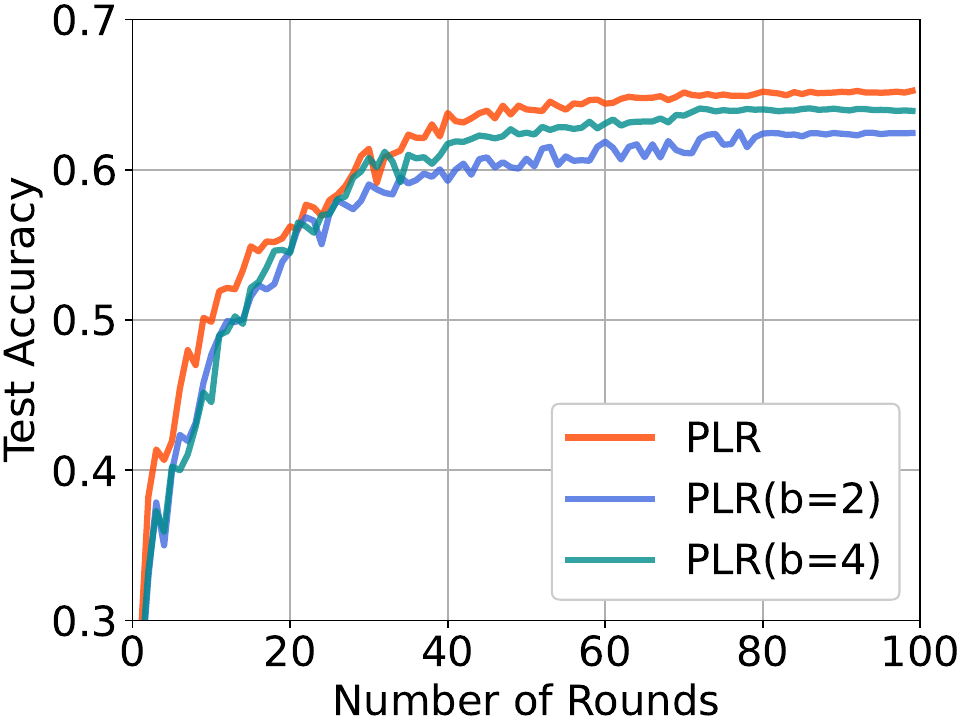}} 
\subfloat[Test Accuracy with MVSA-Single ]{\includegraphics[width=0.49\linewidth]{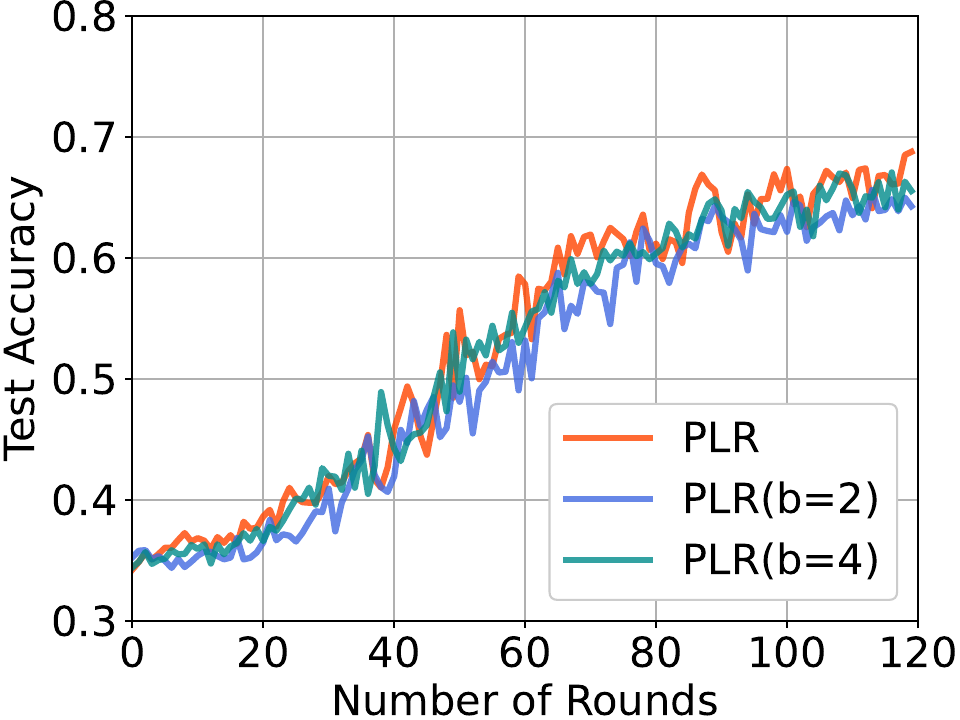}} \\
\subfloat[Test Accuracy with UCI-HAR by communication cost ]{\includegraphics[width=0.49\linewidth]{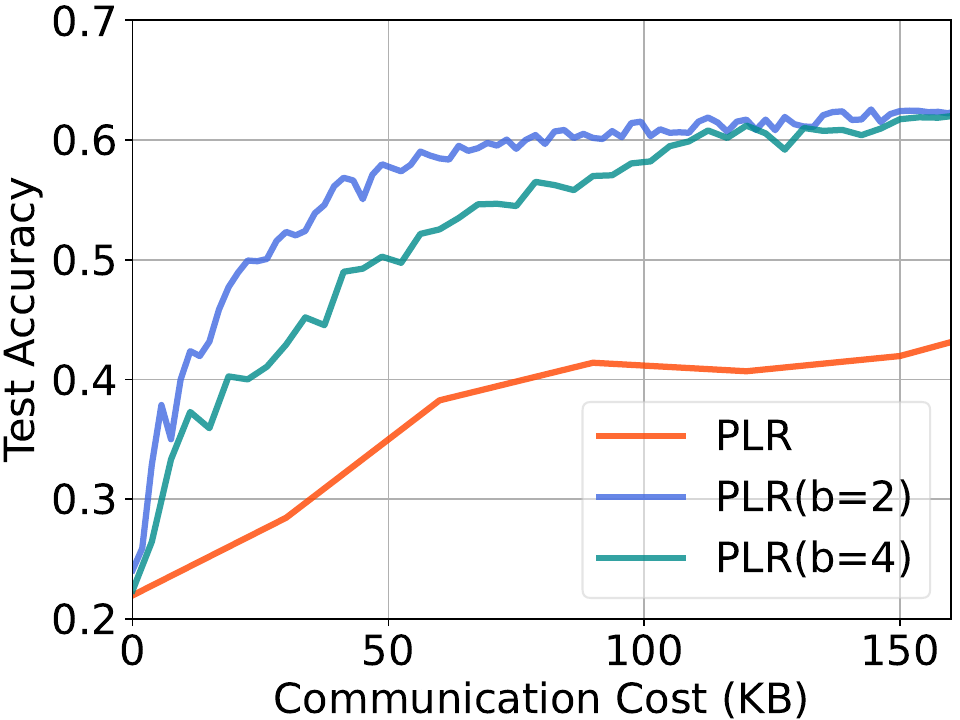}} 
\subfloat[Test Accuracy with MVSA-Single by communication cost ]{\includegraphics[width=0.49\linewidth]{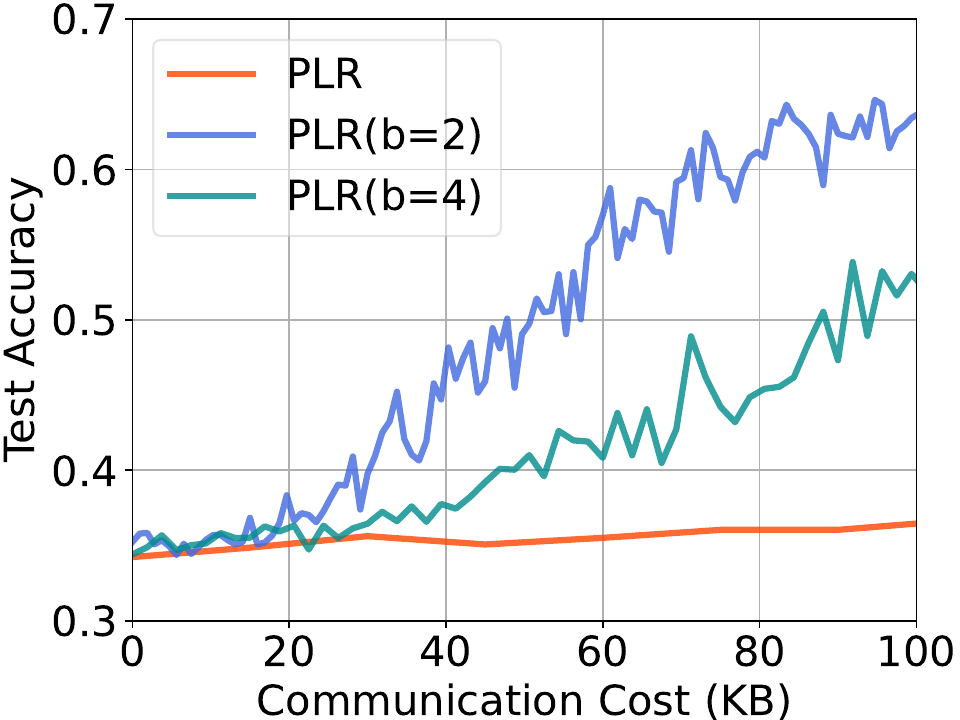}}
\caption{Performance evaluation of the proposed algorithm under varying quantization levels and  quality imbalance.}   \label{mmofl-quanti-qlr}
\vspace{-15pt}
\end{figure}

\section{Conclusion }
In this work, we investigate the MMO-FL framework, which supports distributed multimodal learning with online data collection on edge devices. We focus on non-ideal conditions where modality quantity and quality imbalance arise due to variations in device performance during data acquisition. To address these challenges, we propose the QQR algorithms, designed to mitigate quantitative and qualitative modality imbalances, respectively. The proposed approach is supported by comprehensive theoretical analysis and validated through extensive experimental evaluations. For future work, we aim to advance the theoretical foundations of MMO-FL by exploring additional challenges to further enrich research in this domain. From a practical perspective, we plan to develop a real-world testbed leveraging IoT edge devices for actual data collection and distributed and online model training. This deployment will help uncover additional issues related to modality quantity and quality imbalance that may not be evident in simulation environments, thereby enabling further refinement and optimization of the proposed QQR algorithms.

\bibliographystyle{IEEEtran}
\bibliography{bibligraphy}

\end{document}